%% file: iclr2026_conference.tex
\documentclass{article} 
\usepackage{iclr2026_conference,times}

\usepackage[utf8]{inputenc}
\usepackage{amssymb,amsmath,mathtools, amsthm}
\usepackage{bbm}
\usepackage{placeins}
\usepackage{booktabs}
\usepackage{pifont,xcolor}
\usepackage[title]{appendix}

\newcommand{\orangestar}{{\color{orange}\ding{72}}}
\usepackage{tikz,xcolor}

\input{mymacros.tex}

\usepackage{hyperref}
\usepackage{url}

\title{Trapped by simplicity: When Transformers fail to learn from noisy features}

\author{
\textbf{Evan Peters}$^{1,2}$\thanks{e6peters@uwaterloo.ca}  \& \textbf{Ando Deng}$^{1}$ \& \textbf{Matheus H. Zambianco}$^{1,2}$ \\ \,\textbf{Devin Blankespoor}$^{1}$ \& \textbf{Achim Kempf}$^{1,2}$\\
$^{1}$University of Waterloo\\
$^{2}$Perimeter Institute for Theoretical Physics
}

\iclrfinalcopy 
\begin{document}

\maketitle

\begin{abstract}

Noise is ubiquitous in data used to train large language models, but it is not well understood whether these models are able to correctly generalize to inputs generated without noise. Here, we study noise-robust learning: are transformers trained on data with noisy features able to find a target function that correctly predicts labels for noiseless features? We show that transformers succeed at noise-robust learning for a selection of $k$-sparse parity and majority functions, compared to LSTMs which fail at this task for even modest feature noise. However, we find that transformers typically fail at noise-robust learning of random $k$-juntas, especially when the boolean sensitivity of the optimal solution is smaller than that of the target function. We argue that this failure is due to a combination of two factors: transformers' bias toward simpler functions, combined with an observation that the optimal function for noise-robust learning typically has lower sensitivity than the target function for random boolean functions. We test this hypothesis by exploiting transformers' simplicity bias to trap them in an incorrect solution, but show that transformers can escape this trap by training with an additional loss term penalizing high-sensitivity solutions. Overall, we find that transformers are particularly ineffective for learning boolean functions in the presence of feature noise.
\end{abstract}

\section{Introduction}

Large language models (LLMs) are powerful tools for natural language processing, code generation, scientific research, and reasoning across a wide range of domains. The training data for these models contain noise in the form of stochasticity and different modalities of errors, and yet LLMs trained on these noisy data are often applied in settings where next-token prediction is highly sensitive to noise in the preceding tokens, such as solving arithmetic problems. This raises the question, are transformers actually capable of \textit{noise-robust learning}, i.e. can transformers trained on data with feature noise learn a target function that makes accurate predictions for noiseless data?

Boolean functions of binary input data provide a simplified setting for studying noise-robust learning. Recent results demonstrate that transformers prefer to learn simple boolean functions for next-bit prediction tasks on binary input data \citep{bhattamishra_simplicity_2023, hahn-rofin-2024-sensitive}. An immediate consequence is that the functions learned by these models are robust to input perturbations (i.e. noise) \textit{at evaluation time} \citep{vasudeva2025transformers}. Yet, less is known about the setting where training data themselves contain feature noise (for example, bitflips randomly applied to the input bitstring). From another angle, \cite{deletang_language_2024} analyzed natural language modeling in terms of compression in the presence of an inherent amount of randomness found in natural language \citep{shannon1951prediction}. However, such analyses neglect the effect of \textit{noise} on information transmission \citep{shannon1948}. It is well-known that feature noise induces simpler solutions in ordinary least squares via \textit{attenuation} \cite{Fuller2009}, but there is little empirical evidence describing how feature noise affects the learning behavior of transformers in discrete domains, like learning boolean functions. Our goal is to study whether these models are capable noise-robust learning -- learning an underlying boolean function from training data with feature noise.

Our main finding is that transformers fail at noise-robust learning for a large class of boolean functions. We attribute this outcome to a combination of two factors: (i) the \textit{simplicity bias} of transformers makes them prefer simple functions that achieve low loss on the training data and (ii) empirical and theoretical arguments that the optimal solution for noise-robust learning is simpler than the target function used to generate the data. Meanwhile, we find that long short-term memory networks (LSTMs), which exhibit less bias towards simple functions, also fail at noise-robust learning, albeit for different reasons. Thus, while a neural network with an inductive bias towards \textit{complex} functions might be capable of noise-robust learning in principle, we show that both transformers (with their simplicity bias) and LSTMs (without a simplicity bias) are poor candidates for this task. 

These results imply that transformers are ill-suited to classification and generative modeling in binary domains where feature noise is prevalent, such as learning to decode classical \citep{KimJRKOV18,nachmani2019hyper,choukroun_error_2022,cammerer2022graph} and quantum error correcting codes \citep{torlai_neural_2017,lange_data_driven_2023,bausch_learning_2023,peters2025sampleimportancedatadrivendecoding}. More broadly, our findings suggest that the simplicity bias of LLMs hurts their ability to learn complex relationships via natural language processing with sufficient feature noise. For instance, models trained on noisy data (e.g. high stochasticity, incorrect grammar or semantics) will likely struggle to perform next-bit prediction for tasks such as arithmetic and discrete mathematics, where each next token depends on preceding noiseless input text according to some sensitive function. This complements observations that feature noise at evaluation time can reduce transformers' ability to do arithmetic and other discrete mathematics \citep{pmlr-v202-shi23a,arithmattack}.
Our work highlights a potential need to mitigate the simplicity biases of large language models, if we hope for these models to learn algorithmic tasks from noisy training data. 

\textbf{Our contributions} we show mixed results for transformers' performance at noise-robust learning. (i) We find that transformers succeed at this task for sparse parity and majority functions at high rates of feature noise, while LSTMs generally fail at this task even for low levels of feature noise. (ii) We show that transformers fail at this task for random $k$-juntas while simultaneously reaching near-optimal accuracy on noisy validation data. (iii) We propose an explanation for this behavior: We observe that the sensitivity of the optimal solution for noise-robust learning is rarely greater than the sensitivity of the target function, and therefore transformers' simplicity bias will result in a solution that is suboptimal for noiseless evaluation. (iv) We explore this hypothesis by showing that transformers can be trapped by an incorrect solution that achieves similar accuracy as the target function on noisy validation data, and that transformers with a penalty for high-sensitivity solutions can escape this trap.

\subsection{Prior work}

\textbf{Learning boolean functions with transformers:} In an effort to understand the success of contemporary LLMs, significant attention has been given to the ability of transformers to model formal languages \citep{bhattamishra_ability_2020,chiang_overcoming_2022,strobl2024formal}, with some results showing shortcomings for modeling certain functions such as $\parity$ \citep{hahn_theoretical_2020,bhattamishra_ability_2020}.  In turn, some work has shown that transformers are biased towards learning low-sensitivity Boolean functions, and that they are robust to label noise \citep{bhattamishra_simplicity_2023, jonasson2023noise,bhattamishra_understanding_2024}. Our work is distinct because we consider learning from examples with feature noise rather than label noise, which has a qualitatively different effect on the learning behavior of language models. The \textit{k-sparse parity} problem is often used to evaluate the learning abilities of transformers \citep{barak_hidden_2023,michaud_quantization_2024}, but we show how performance at learning this function is somewhat deceptive in the setting of noise-robust learning.

\textbf{Simplicity bias in transformers:} A growing body of evidence shows that neural networks exhibit a bias towards learning simple functions for a variety of domains and simplicity measures \citep{arpit2017,valle-perez2018deep, NEURIPS2019_b432f34c,mingard2020neural,cao2020understandingspectralbiasdeep,yang2020finegrainedspectralperspectiveneural,pmlr-v97-rahaman19a}. \cite{bhattamishra_simplicity_2023} showed empirically that transformers tasked with learning boolean functions are biased towards low-sensitivity solutions, compared to other recurrent models such as LSTMs. Later works provided additional theoretical and empirical evidence for this effect \citep{hahn-rofin-2024-sensitive,vasudeva2025transformers}. Our work extends and applies these insights in two ways: We demonstrate that LSTMs generally fail to learn boolean functions given noisy input data, while transformers exhibit function-dependent learning abilities. Indeed, we argue that due to their low sensitivity bias, transformers are fundamentally less capable of learning boolean functions in the presence of feature noise, while such functions might in principle be learnable by a model with high sensitivity-bias. 

\textbf{Information theory and noisy features:}  Information theory provides a natural framework for describing any learner's ability to to predict subsequent or missing tokens of text in terms of the redundancy (or compressibility) of a source of randomness \citep{shannon1951prediction}.  Accordingly, prior work has related language modeling to compression in the context of Shannon source coding \citep{teahan2003using,deletang_language_2024} or Kolmogorov complexity \citep{sutskever2023}. However, our analysis differs by considering noisy features produced by some stochastic map acting on noiseless bitstrings, so that our setting is more related to noisy channel coding \citep{shannon1948,shannon1949b} than compression.

\section{Background}\label{sec:background}

We first introduce some notation for dealing with distributions of random variables, and bitstring-valued variables in particular. We usually consider a random length-$n$ bitstring $X := (X_1, \dots, X_n)$ taking value $x := (x_1, \dots, x_n) \in \{0,1\}^n$ uniformly at random. The expected value of a function $f$ with domain $\{0,1\}^n$ is written $\E_{p_X}[f(x)]:= \sum_{x \in \X} p_X(x) f(x)$, or just $\E_x [f(x)]$ when there is no ambiguity. We define the conditional distribution $p_{Y|X}$ of the $n+1$ bit $Y := f(X)$ generated by applying a boolean function to the \textit{noiseless} input bitstring $X$. Optimal performance at (noiseless) next-bit prediction is related to the next-bit \textit{conditional entropy} of $Y$ given $X$: The entropy $H(X):=\E_{x}[-\log(p_X(x))]$, loosely, measures the uncertainty in predicting the value of $X$. The conditional entropy $\HHH(X|Y):= \E_{x,y}[-\log(p_{X|Y}(x|y))]$ describes the uncertainty about the value of $X$ given that $Y$ is known.

We model feature noise using independent, symmetric bitflip errors on uniformly random input bitstrings. This error model is used broadly in both boolean analysis \citep{o2014analysis}, and communication theory \citep{cover1999elements}. We describe bitflips using a random variable $E\in\{0,1\}^n$, where $\Pr(E_i= 1) := p$. Then, the \textit{noisy bitstring} $Z = X \oplus E$ is generated by adding $E$ to the \textit{noiseless} bitstring $X$ (mod 2), and induces a distribution $p_Z$. We generate a training data point $(Z, Y)$ for next-bit prediction with noisy features as follows: (i) Sample a noiseless bitstring $X$ according to $p_{X}$, (ii) generate the next bit $Y = f(X)$, (iii) apply iid bitflip errors $E^n$ to create noisy features $Z$, and the goal of the learner is to predict $Y$ given $Z$ with $(Z,Y)\sim p_{ZY}$. Noisy validation data are generated in the same way, while noiseless test data $(X, Y)$ are sampled directly from $p_{XY}$. We define the \textit{noisy generalization error} of a boolean function $g$ with respect to a label function $f$ (used to generate labels $Y=f(X)$) as
\begin{equation}
    \err_f(g):= \Pr_{X, Z}(g(Z) \neq f(X)) = \Pr_{Z, Y}(g(Z) \neq Y),
    \label{eq:err_fg}
\end{equation}
where $\Pr_X(\cdot)$ denotes the probability with respect to $X \sim p_X$. We are mainly interested in a model's ability to learn the function $f$ after training only on noisy training data $(Z, Y)$. The \textit{noiseless generalization error} of a model $g$ evaluated on noiseless data is computed as $\Pr_{X}(g(X) \neq f(X))$. A learning algorithm succeeds at noise-robust learning if it learns a boolean function $g$ with small noiseless generalization error, after being trained to minimize (empirical estimates of) $\err_f(g)$. Throughout this work, we \textit{only} consider training data with noisy features $(Z, Y)$ where $Z$ is a noisy version of $X$ while $Y$ is unaffected by label noise.

We will compare the noise-robust learning capabilities of self-attention network transformers (SANs) \citep{vaswani2017attention} to LSTMs, continuing a recent line of investigation into the relative advantages and behaviors of these models \citep{bhattamishra_simplicity_2023}. Many of our experiments involve the parity function, $\parity(x):= \wt(x) \mod 2$, and the majority function $\maj(x)$ which outputs $1$ if and only if $\wt(x)\geq n/2$, where $\wt(x)$ denotes the Hamming weight. Importantly, $\maj$ is an imbalanced function when $n$ is even. When relevant, we will use a subscript to refer to the input length (e.g. $\maj_n$). We will often consider \textit{sparse} versions of these functions, whose output only depends on a subset of $k\leq n$ input bits. We will denote this by an $(n, k)$ pair, e.g. $\maj(n, k)$ computes the majority for a specific subset of $k$ bits.

Performing experiments using boolean functions dependent on relatively few bits allows us to compare models' performance to an optimal prediction rule. We denote a Bayes-optimal predictor for the distribution $p_{ZY}$ as $f_N^*:\{0,1\}^n\rightarrow \{0,1\}$, to be a  boolean function that satisfies 
\begin{equation}\label{eq:fnstar_lowest_error}
    \err_f(f_N^*) \leq \err_f(g)
\end{equation}
for all $g: \{0,1\}^n \rightarrow \{0,1\}$. The choice of optimal predictor is not unique in general (for instance, when $f(x) = x_1$, $f_N^*(x)$ will not be influenced by $x_2\dots x_n$). We choose to evaluate a particular optimal predictor with the following formula:
\begin{equation}
    f_N^*(x) := \sign (T_{1-2p} f(x)),
\end{equation}
where $T_\rho g(x):= \E_{Z|X=x}[g(Z)]$ denotes the \textit{noise operator} with respect to the distribution $p_{XZ}$ with bitwise correlation $\rho := \E[x_i z_i]$ \cite{o2014analysis}. We define $\optset(f)$ to be the set of all boolean functions $h$ such that $\err(h)=\err(f_N^*)$. We defer further background on Boolean functions to Appendix~\ref{sec:details_bool}. As mentioned before, the task of predicting $f(x)$ given $z$ is rooted in noisy channel coding. Specifically, the best possible accuracy $\err(f_N^*)$ is bounded from above and below by (monotone decreasing) functions of the \textit{next bit (conditional) entropy} \citep{feder_relations_1994}:
\begin{equation}\label{eq:feder1994} 
    \Phi^{-1}\left(\HHH(Y|Z)  \right) \leq  \err_f(f_N^*) \leq  \phi^{-1}\left(\HHH(Y|Z)  \right).
\end{equation}
Here, $\phi$ is an invertible piece-wise linear function, while $\HHH(Y|Z) \leq \Phi(\err_f(f_N^*))$ is Fano's inequality (see Appendix~\ref{app:info_theory}). The inequalities in Eq.~\ref{eq:feder1994} are analogous to upper \citep{compressedsensing1,compressedsensing2} and lower bounds \citep{peters2024bounds} for certain learning tasks on continuous domains.

The dependence of next-bit prediction accuracy with noisy features on the next-bit conditional entropy $\HHH(Y|Z)$ captures a broader relationship between next-token prediction and noisy channel coding \citep{shannon1948}. We model next-token prediction as a communication process between a sender Alice and a receiver Bob, using a finite alphabet $\Lambda$. Alice first chooses a token of information $Y=f(X) \in \Sigma$ that she wishes to communicate to Bob. She then encodes this token into the space of token sequences $X \in \Lambda^n$. This encoding process contains some randomness (e.g. there are many ways to construct sentences that are semantically equivalent), as well as noise (grammatical or syntactical errors), and so the map $Y \rightarrow X$ need not be one-to-one or even deterministic. Bob receives noisy bitstring $Z \in \Lambda^n$ due to this combination of noise and stochasticity, and his goal is to decode the token $f(X)$ of Alice's message given the string of noisy tokens $Z$ \textit{without knowledge of her encoding scheme}. Thus, learning next-token prediction from noisy features is firmly rooted in noisy channel coding, and is distinct from (but complements) alternative models based on source coding (compression) \citep{deletang_language_2024} or Kolmogorov complexity \citep{sutskever2023}. Since Bob does not a priori know Alice's encoding scheme, he may not be capable of decoding her messages even in the noiseless setting. Eq.~\ref{eq:feder1994} therefore relates next-bit entropy to limitations on generalization performance in language modeling, and is related to noisy channel capacity in the asymptotic limit. In contrast, Bob's ability to learn $f$ from noisy features depends on properties of the noise and $f$, which we will return to in Section~\ref{sec:is_robust}.

\section{Transformers succeed at noise-robust learning of sparse parities and odd majorities}

\begin{figure}[ht]
    \centering
    \includegraphics[width=1\textwidth]{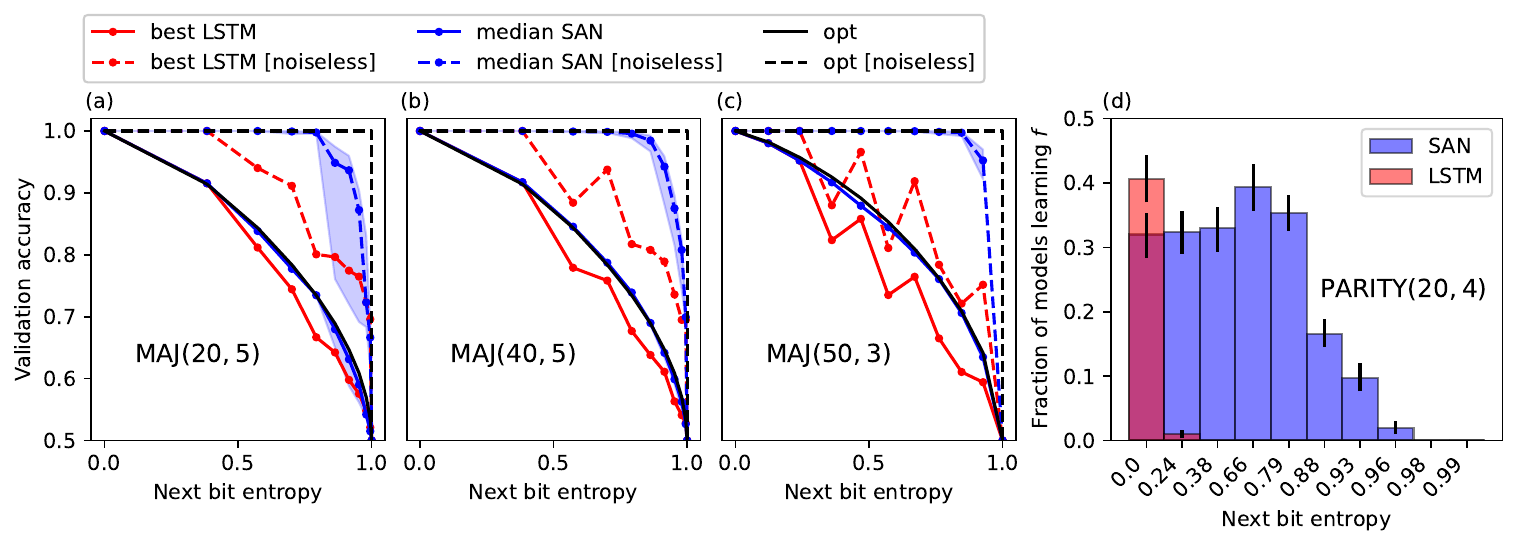}
    \caption{ \textbf{Transformers learn $\maj_n$ (with odd $n$) and $\parity$ robustly from noisy features}. (a-c) For $\maj(20, 5)$ and $\maj(40, 5)$, the \textit{median} transformer (SAN) reliably outperforms the \textit{best} LSTM across 300 training runs with a variety of hyperparameters tuned to optimize both architectures' success probability. Validation accuracy approximates $\err_f(\hat{f})$ using 10000 examples, where $\hat{f}$ is either an LSTM or SAN prediction rule. Each point on the solid lines represents the best (median) LSTM (SAN) from  300 training experiments. (d) While both LSTMs and SANs fail in a large fraction of training experiments learning $\parity(20, 4)$ with feature noise, transformers successfully learn $\parity(20, 4)$ (defined as achieving noiseless accuracy $\geq 95\%$) more often than LSTMs, even when both architectures perform comparably at zero noise rate. 
    See Appendix~\ref{app:experiments} for experiment details.}
    \label{fig:1}
\end{figure}

We now show that for a certain class of functions, transformers are able to learn a noiseless function $f$ when trained entirely with noisy features. This moves beyond prior work showing that  transformers learn functions robust to noise at evaluation time \citep{vasudeva2025transformers}. Our experiments show that transformers are more robust than LSTMs at learning $\parity(n,k)$ and $\maj(n,k)$ (but only when $k$ is odd), when trained on data with iid bitflip noise applied to input features. We tested two kinds of boolean functions: sparse majorities and sparse parities. For each choice of bitflip rate $p$, we generated a size $N$ dataset of $(Z,f(X))$ pairs, and then trained an LSTM or transformer to predict the label for each point. We do not apply label noise to any of our data. Since a model's ability to learn is sensitive to initial parameter choice and hyperparameters, we repeated this process 300 times for each model with random initializations and random hyperparameters. We constructed the set of possible hyperparameters by iterative grid search while training each model on noiseless data, and then expanded this set slightly to account for hyperparameter optimality changing with noise strength. We provide detailed experiment descriptions in Appendix~\ref{app:experiments}.

Fig.~\ref{fig:1} shows the relative performance of LSTMs versus SANs on learning several commonly-tested boolean functions with feature noise. We considered the sparse majority functions $\maj(20, 5)$, $\maj(40, 5)$, $\maj(50, 3)$ with $N=2000$ noisy training data. In both cases, LSTMs and SANs reliably learn with zero feature noise. At modest noise strengths, the \textit{best} of 300 LSTMs performs worse than the \textit{median} transformer (which performs close to the information-theoretic optimal). We also compare both models' abilities to learn $\parity(n,k)$. However, sparse parity is significantly harder to learn: (i) LSTMs generally fail to learn $\parity(n, k)$ for $n>20$ even with noiseless inputs \citep{bhattamishra_simplicity_2023}, and (ii) learning parity with feature noise rate $p$ corresponds to an instance of \textit{learning parity with noise} \citep{lpn2003}, an intractable learning problem. For $f:=\parity(20, 4)$, we compute the fraction of models that learned $f$ from noisy training data since the training process is highly sensitive to initial parameter choices. Among SANs and LSTMs that achieve comparable performance on learning $\parity(20, 4)$ in the noiseless setting, only SANs are capable of learning from noisy input features. We also found that this behavior extends to sparse multitask parities \citep{michaud_quantization_2024}, for which transformers succeeded at noise-robust learning while all LSTMs failed. Thus, for the commonly-studied $\parity$ and $\maj$ functions, we find that transformers achieve better learning outcomes across a variety of hyperparameter choices and initializations.

\subsection{Noise robustness versus simplicity bias}\label{sec:is_robust}

What explains transformers' impressive robustness to feature noise compared to LSTMs? Fig.~\ref{fig:1} demonstrated that transformers can learn from data with feature noise more robustly than LSTMs for a few specific functions. One explanation for this behavior is the observed \textit{simplicity bias} of deep neural networks \citep{arpit2017,valle-perez2018deep, mingard2020neural,cao2020understandingspectralbiasdeep}, as recent works demonstrate that transformers tend to learn simpler boolean functions compared to LSTMs without any explicit regularization \citep{bhattamishra_simplicity_2023, hahn-rofin-2024-sensitive}. Sensitivity is a common measure of simplicity in the context of boolean functions \citep{sensitivity_1988}. The \textit{sensitivity} of a boolean function $f$ is defined as
\begin{equation}
    \sens[f]:=  \sum_{i=1}^n \Pr_{x\sim \{0,1\}^n} (f(x) \neq f(x^{\oplus i})),
\end{equation}
where $x^{\oplus i}$ denotes $x$ with a bitflip at location $i$. Simplicity bias helps explain why transformers trained on noiseless data enjoy robustness to feature noise at \textit{evaluation time}, since small test error on noisy data combined with transformers' robustness implies small noisy generalization error (Eq.~\ref{eq:err_fg}): For uniformly distributed $x\in\{0,1\}^n$ and iid bitflip noise with probability $p$, a function $\hat{f}$ obeys $\Pr_{x,z}(\hat{f}(x)\neq \hat{f}(z)) \leq p\sens[\hat{f}]$ \citep{o2014analysis}.  Ordinarily, a transformer is trained to achieve low error on \textit{noiseless inputs}, such that $\Pr_x(\hat{f}(x) \neq f(x))=\epsilon$ is small. Then, a simple triangle inequality gives
\begin{equation}
    \err_f(\hat{f})  \leq \epsilon + p\sens[\hat{f}].
\end{equation}
From this, we see that a transformer's simplicity bias (preference for $\hat{f}$ with small $\sens[\hat{f}]$) implies small error in predictions on noisy features at evaluation time. This intuitively explains observations that transformers trained on noiseless data can accurately classify noisy examples at evaluation time \citep{pmlr-v162-zhou22m}, and has been referred to as \textit{noise robustness} in other literature \citep{vasudeva2025transformers}.

However, in our setting we are concerned with \textit{training models on noisy features}, which is a qualitatively different than what is typically studied for boolean learning problems. Specifically, we ask: Will transformers trained on noisy features learn to make accurate predictions on noiseless data at evaluation time? The general answer is \textbf{no}. Our first clue that transformers can fail at noise-robust learning is that the functions $\parity(n,k)$ and $\maj(n,k)$ analyzed above are actually special boolean functions that happen to be optimal for prediction on both noiseless and noisy features. The following proposition summarizes several results of \citep{weinbergerSelfPredictingBooleanFunctions2019} demonstrating how the majority and parity functions functions are qualitatively special among boolean functions:
\begin{proposition}\label{prop:sens_bias}
    For each function $f \in \{\maj_n, \parity\}$ ($n$ odd), $f$ is optimal for prediction on noisy features data, i.e. $f = f_N^*$.
\end{proposition}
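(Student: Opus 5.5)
We reduce to the explicit formula $f_N^*(x)=\sign(T_{1-2p}f(x))$. The Bayes-optimal rule, given $Z=z$, outputs the label value with larger posterior probability. Since $X$ is uniform and the bitflip channel is symmetric, $p_{X|Z}(x|z)=p_{Z|X}(z|x)$. Hence the conditional distribution of $X$ given $Z=z$ is $z\oplus E$, and
\[
\Pr(Y=1\mid Z=z)=\E_E[f(z\oplus E)],
\]
which is the noise operator applied to $f$ at $z$. We work in $\pm1$ notation as in \citep{o2014analysis}, with $\rho=1-2p$ and $0\le p<1/2$. Then $f_N^*(z)=\sign(T_\rho f(z))$ at every $z$ where $T_\rho f(z)\neq 0$. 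It therefore suffices to show, for each $f\in\{\maj_n,\parity\}$, that $\sign(T_\rho f(x))=f(x)$ for all $x$. In particular we need $T_\rho f(x)\neq 0$, so the optimum is unique and no tie-breaking issue arises.

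\textbf{Parity.} Since $\parity$ (or its sparse version on a set $S$) is the single character $\chi_S$, it is an eigenfunction of the noise operator: $T_\rho\chi_S=\rho^{|S|}\chi_S$. For $p<1/2$ we have $\rho^{|S|}>0$, so $\sign(T_\rho\chi_S)=\chi_S$ pointwise. This case is immediate.

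\textbf{Majority, $n$ odd.} In $\pm1$ notation, fix $x$ with $\maj_n(x)=1$, i.e.\ $\sum_i x_i>0$; the case $\maj_n(x)=-1$ follows by the symmetry $\maj_n(-x)=-\maj_n(x)$. We must show $\Pr_E(\maj_n(x\oplus E)=1)>1/2$. We use a monotone coupling. Let $k>n/2$ be the number of $+1$ coordinates of $x$. Write $A$ for the number of flips among those $k$ coordinates and $B$ for the number among the other $n-k$. The noisy input has majority $+1$ iff $k-A+B>n/2$. By symmetry of $p$, the variable $k-A+B$ has the same distribution as the count for the point $x'$ obtained from $x$ by flipping some $+1$ coordinates. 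It is cleaner, however, to argue via a pairing. Pair $x$ with $-x$. Since $\maj_n$ is odd, $\Pr(\maj_n(x\oplus E)=1)=1-\Pr(\maj_n(-x\oplus E)=1)$. So it suffices to show that $q(x):=\Pr(\maj_n(x\oplus E)=1)$ is monotone nondecreasing in $x$ in the coordinatewise order, and strictly larger at $x$ than at $-x$.

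Monotonicity follows from a coupling: use the same $E$ for both points. If $x\le x'$ coordinatewise then $x\oplus E\le x'\oplus E$, and $\maj_n$ is monotone. For strictness, flipping a single coordinate from $-1$ to $+1$ increases $q$ by the probability that this coordinate is pivotal in the noisy string, times $(1-2p)>0$. That pivotal probability is positive because $p>0$, or because $x$ itself is pivotal when $p=0$. Summing these increments along a monotone path from $-x$ to $x$ gives $q(x)>q(-x)=1-q(x)$, hence $q(x)>1/2$.

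The main step requiring care is this strictness argument. We must write the increment exactly as $(1-2p)\Pr(\text{coordinate } i \text{ pivotal})$ and check that pivotality has positive probability. Oddness of $n$ is used here: it ensures $\maj_n$ has no ties, so $q(x)+q(-x)=1$ holds exactly. For even $n$ this fails, matching the paper's remark that $\maj$ is imbalanced.
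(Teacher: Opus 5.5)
Your reduction to $\sign(T_\rho f)=f$ pointwise and your parity argument are fine and match the paper. The paper checks the pointwise self-prediction condition $\Pr_E(f(z)=f(z\oplus E))\ge 1/2$ (Lemma~\ref{lemma:generator}). For parity this is the computation $\Pr(\wt(E)\text{ even})=\frac12(1+(1-2p)^n)$, which is your eigenvalue identity $T_\rho\chi_S=\rho^{|S|}\chi_S$ in different notation. The paper gives no proof for majority: it cites Weinberger et al.\ and omits a combinatorial argument. That part is your own, and as written it has a gap.

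For $x$ with $\sum_i x_i>0$ and $x\neq\mathbf{1}$, the points $x$ and $-x$ are \emph{not} comparable in the coordinatewise order (e.g.\ $x=(1,1,-1)$, $-x=(-1,-1,1)$). So there is no monotone path from $-x$ to $x$, and monotonicity of $q$ alone says nothing about $q(x)$ versus $q(-x)$. Your sentence ``it suffices to show \dots strictly larger at $x$ than at $-x$'' is also circular: given $q(x)+q(-x)=1$, the inequality $q(x)>q(-x)$ is equivalent to the goal $q(x)>1/2$.

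The missing ingredient is permutation invariance. Because $\maj_n$ is symmetric and the noise is i.i.d., $q(x)$ depends only on the number $k$ of $+1$ coordinates of $x$. Your abandoned remark already shows this: the noisy count is $k-A+B$ with $A\sim\mathrm{Bin}(k,p)$ and $B\sim\mathrm{Bin}(n-k,p)$. The repair then goes as follows:
\begin{itemize}
\item Since $-x$ has $n-k<k$ coordinates equal to $+1$, it is a coordinate permutation of some $y\le x$, so $q(-x)=q(y)$.
\item The monotone path from $y$ to $x$ raises $2k-n\ge 1$ coordinates. By your increment formula, which is correct, each step contributes $(1-2p)\Pr(\text{pivotal})$.
\item This is strictly positive for $0<p<1/2$, because $n-1$ is even, so some noise pattern on the other coordinates has positive probability and balances them.
\item Hence $q(x)>q(-x)=1-q(x)$.
\end{itemize}
Your justification for $p=0$ (``$x$ itself is pivotal'') is imprecise, but that case is trivial anyway since $T_1 f=f$.

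With this step added, the majority argument becomes a clean monotone-coupling proof of what the paper leaves out. It also gives the slightly stronger conclusion $T_\rho f\neq 0$ everywhere, so $f$ is the unique optimum rather than merely an element of $\optset(f)$.
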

In the terminology of \cite{weinbergerSelfPredictingBooleanFunctions2019}, the majority and parity functions are \textit{self-predicting}, as each function achieves optimal test error when evaluated on its respective noisy input distribution $(Z, f(X))$ and can therefore, in principle, be learned by ordinary loss minimization (the self-predicting property immediately extends to sparse versions of each function). In this way, $\maj(n, k)$ with odd $n$ and $\parity(n, k)$ function are uniquely easy to learn with feature noise. We will now argue that for a typical boolean function $f$, it is instead true that $f \neq f_N^*$. Therefore, we should expect that SANs and LSTMs will fail to learn $f$ from noisy inputs, though we show an example of mitigating this shortcoming via a tailored loss function.

\section{Trapping transformers with simplicity bias}\label{sec:not_robust}

Section~\ref{sec:is_robust} suggested that transformers can learn boolean functions robustly in the presence of feature noise for several commonly-considered functions. We now show that this behavior does not hold in general, and should even be considered atypical. We will show that transformers generally fail at noise-robust learning due to a combination of their simplicity bias, and the observation that the optimal prediction rule for noisy data on average has lower sensitivity than the optimal prediction rule for noiseless data for randomly $k$-juntas with small $k$. Thus, whenever a model with low sensitivity bias is trained via risk minimization on noisy examples $(Z, f(X))$ on such functions, the model will typically fail to learn the target function $f$ without further intervention.

Intuitively, feature noise should cause an optimal predictor $f_N^*$ to be simpler than $f$, by some measure of simplicity. For example, if $f$ is an imbalanced function and $p$ is sufficiently large, $f_N^*$ will be a constant function. The relative simplicity of $f_N^*$ compared to $f$ is also observed in least-squares regression as \textit{attenuation}, wherein feature noise decreases the learned slope in ordinary least squares regression \citep{Fuller2009}. Similarly, feature noise can be understood to act as a regularization term in learning problems \citep{bishop1995training,wager2013}. Whenever $f_N^*\neq f$, a model minimizing validation error by finding some $\hat{f} \in \optset(f)$ will forgo the possibility of learning $f$. In this case, it might still be possible to learn $f$ noise-robustly, but only if the model has inductive biases towards learning $f$ rather than alternative solutions $g$ with $\err_f(g) < \err_f(f)$. Conversely, the simplicity bias of transformers can actually harm their ability to do noise-robust learning since the optimal solution $f_N^*$ for noise-robust learning often has lower sensitivity than the target function $f$, which we make concrete with the following proposition:

\begin{proposition}\label{prop:new}
    Let $f:\{0,1\}^n\rightarrow \{0,1\}$ be a  function sampled uniformly at random from the set of all boolean functions on $n$ bits. Then, for sufficiently large $n$ the optimal predictor $f_N^*(x) := \sign (T_{1-2p} f(x))$ has average sensitivity
    \begin{equation}
         \E_f[\sens[f_N^*]] \approx \frac{n}{\pi} \arccos\left( \frac{2p(1-p)}{p^2 + (1-p)^2}\right),
    \end{equation}
    and in particular for $p \in (0, 1/2]$,
    \begin{equation}\label{eq:sens_rand_f}
        \E_f[\sens[f]] > \E_f[\sens[f_N^*]] 
    \end{equation}
\end{proposition}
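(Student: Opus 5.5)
We work in the $\pm1$ convention, identifying $f$ with $F(x)=(-1)^{f(x)}$, so that $f_N^*=\sign(T_{1-2p}F)$. When $f$ is uniformly random, the values $F(z)$, $z\in\{0,1\}^n$, are i.i.d.\ Rademacher variables. The plan has three steps:
\begin{enumerate}
\item compute $\E_f[\sens[f]]$ exactly;
\item reduce $\E_f[\sens[f_N^*]]$ to a sign-disagreement probability for a pair of correlated sums;
\item evaluate that probability in the large-$n$ limit by a central limit theorem together with Sheppard's formula.
\end{enumerate}

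\textbf{Step 1.} For a random $f$ and any fixed $x$ and $i$, the bits $f(x)$ and $f(x^{\oplus i})$ are independent fair bits. Hence $\Pr(f(x)\neq f(x^{\oplus i}))=1/2$ and $\E_f[\sens[f]]=n/2$ exactly.

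\textbf{Step 2.} By linearity and symmetry,
\[
\E_f[\sens[f_N^*]]=n\Pr_f\bigl(f_N^*(x)\neq f_N^*(x^{\oplus 1})\bigr)
\]
for any fixed $x$. Write $A=T_{1-2p}F(x)=\sum_z w_x(z)F(z)$ and $B=T_{1-2p}F(x^{\oplus1})=\sum_z w_{x^{\oplus1}}(z)F(z)$, where $w_x(z)=p^{d(x,z)}(1-p)^{n-d(x,z)}$. The covariance factorizes over coordinates.
\begin{itemize}
\item Each coordinate $j\neq 1$, and also coordinate $1$ when computing a variance, contributes $\sum_{b} w_b^2=p^2+(1-p)^2$.
\item Coordinate $1$ in the cross term contributes $2p(1-p)$, because $x$ and $x^{\oplus1}$ disagree there.
\end{itemize}
Therefore
\[
\operatorname{Var}A=\operatorname{Var}B=\bigl(p^2+(1-p)^2\bigr)^n,\qquad \operatorname{Cov}(A,B)=2p(1-p)\bigl(p^2+(1-p)^2\bigr)^{n-1},
\]
and the correlation is
\[
r=\frac{2p(1-p)}{p^2+(1-p)^2},
\]
independent of $n$.

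\textbf{Step 3.} Apply the multivariate Lindeberg CLT to the normalized pair $(A,B)/\sqrt{\operatorname{Var}A}$. Since $(1-p)^2<p^2+(1-p)^2$ for $p>0$, the ratio $\max_z w_x(z)^2/\sum_z w_x(z)^2=\bigl((1-p)^2/(p^2+(1-p)^2)\bigr)^n$ decays exponentially. So no single summand dominates, Lindeberg's condition holds, and $(A,B)/\sqrt{\operatorname{Var}A}$ converges in distribution to a centered bivariate Gaussian with correlation $r$.

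The limit law puts no mass on $\{A=0\}\cup\{B=0\}$. Ties in $\sign$ are therefore asymptotically negligible, and the portmanteau theorem lets us pass the limit through the indicator of $\sign A\neq\sign B$. Sheppard's formula then gives $\Pr(\sign A\neq\sign B)\to\arccos(r)/\pi$, and multiplying by $n$ yields the claimed approximation.

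\textbf{Main obstacle.} Making the CLT step rigorous is the delicate part. The weights are very non-uniform, since they decay geometrically in the Hamming distance, so the plain i.i.d.\ CLT does not apply and the Lindeberg ratio above is what must be checked. We also need some control on $\Pr(A=0)$ to justify the sign comparison, which the continuous limit together with portmanteau supplies. A Berry--Esseen bound in two dimensions would give an explicit $o(1)$ error, making the ``$\approx$'' quantitative as $\arccos(r)/\pi+o(1)$ per coordinate.

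\textbf{Inequality.} For $p\in(0,1/2]$ we have $0<2p(1-p)\le p^2+(1-p)^2$, so $r\in(0,1]$ and $\arccos(r)<\pi/2$. The limit therefore gives $\frac{n}{\pi}\arccos(r)<\frac n2=\E_f[\sens[f]]$. The gap is a constant fraction of $n$, so it dominates the $o(n)$ CLT error, and the strict inequality \eqref{eq:sens_rand_f} holds for all sufficiently large $n$.

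At $p=1/2$ we have $r=1$, and the argument needs a separate remark. In that case $T_0F$ is the constant $\E[F]$, so $f_N^*$ is constant, $\sens[f_N^*]=0$, and the inequality holds trivially.
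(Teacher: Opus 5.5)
Your proposal is correct and follows essentially the same route as the paper: compute the covariance of $(T_\rho f(x), T_\rho f(x^{\oplus i}))$ over a random $f$, obtain the correlation $r=\frac{2p(1-p)}{p^2+(1-p)^2}$, invoke the CLT, and apply Sheppard's formula. The paper gets $r=\frac{1-\rho^2}{1+\rho^2}$ from the Fourier expansion, which agrees with yours after substituting $\rho=1-2p$, whereas you factor the kernel directly over coordinates. Your additions are the Lindeberg check, the portmanteau treatment of sign ties, the separate $p=1/2$ case, and the explicit value $\E_f[\sens[f]]=n/2$ needed for the strict inequality; these fill in steps the paper leaves implicit rather than changing the argument.
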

We prove Proposition~\ref{prop:new} in Appendix~\ref{app:observation_1}. Importantly, Eq.~\ref{eq:sens_rand_f} states that for uniformly random boolean functions on sufficiently many bits, the sensitivity $f$ is larger than the sensitivity of the optimal predictor $f_N^*$ on average. We observe that this inequality typically holds even for small $n$: Fig.~\ref{fig:2} shows that $\sens[f] \geq \sens[f_N^*]$ holds for random $k$-juntas ($k \in (5,6,7,8)$), and additional simulations find no violations of the inequality. On the other hand, there \textit{do} exist functions for which $\sens[f] < \sens[f_N^*]$, and so $\sens[f] \geq \sens[f_N^*]$ can only hold as an average case statement. We discuss these details further in Appendix~\ref{app:observation_1}.

\begin{figure}[ht]
    \centering
    \includegraphics[width=\linewidth]{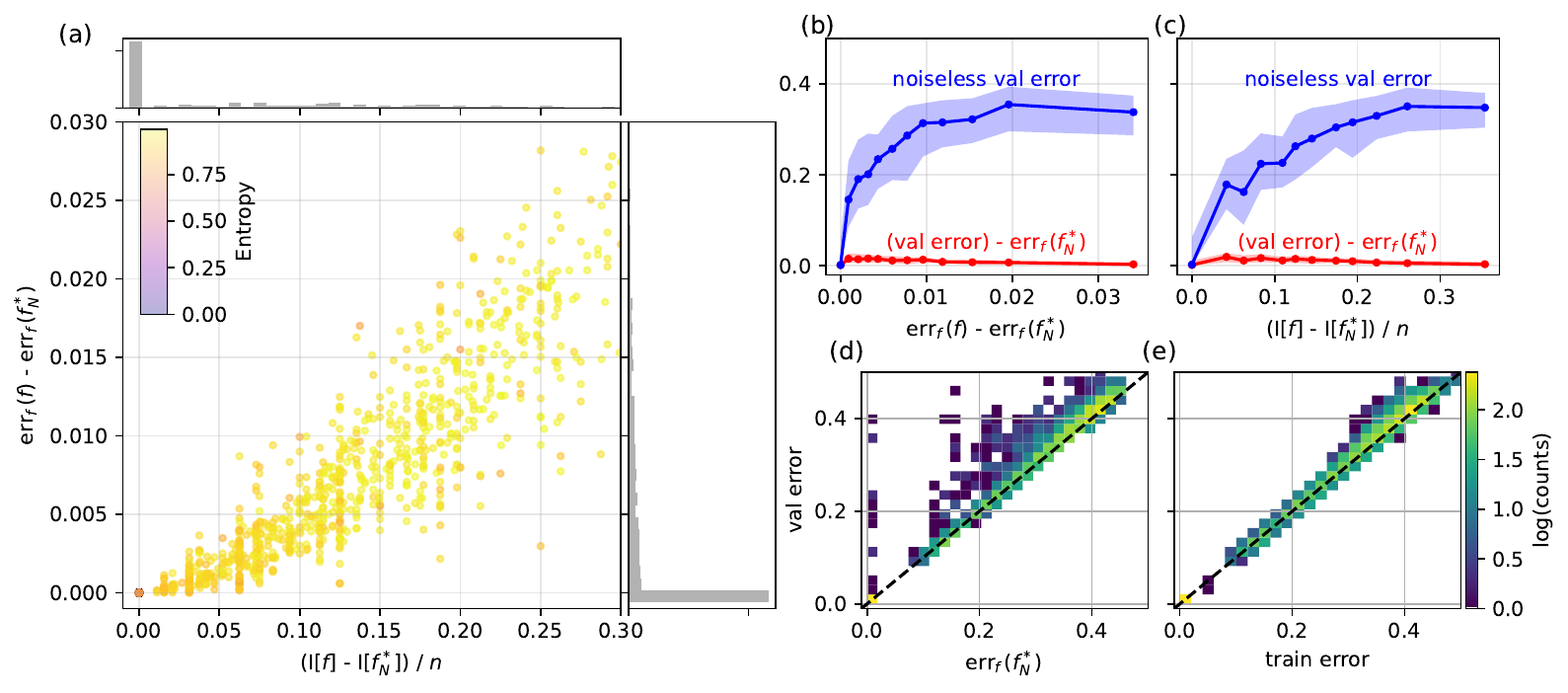}
    \caption{\textbf{Transformers generally fail at noise-robust learning for random $k$-juntas, and perform worse as the difference in sensitivity and validation error for $f$ versus $f_N^*$ grows.} (a) Each point represents a randomly sampled $k$-junta $f$ (3200 total). Every randomly sample $f$ obeys $\sens[f] \geq \sens[f_N^*]$, while by definition $\err_f(f) \geq \err_f(f_N^*)$. Minimizing validation error in noise-robust learning  will only succeed for functions near the bottom of the plot, while a training algorithm with low sensitivity bias will only succeed for points near the left of the plot. By Prop.~\ref{prop:sens_bias}, $\maj_n$ (odd $n$) and $\parity$ are represented by the coordinate $(0, 0)$. (b-c) Transformers only succeed at noise-robust learning when $\sens[f] \approx \sens[f_N^*]$ and $\err_f(f) \approx \err_f(f_N^*)$ (across 3200 learning experiments). Histograms of models' final validation error, train error, and optimal error demonstrate that noise-robust learning fails despite (d) near-optimal performance with (e) little overfitting. See Appendix~\ref{app:experiments} for additional experimental details.}
    \label{fig:2}
\end{figure}
Fig.~\ref{fig:2} demonstrates that the sensitivity difference between $f$ and $f_N^*$ is usually large, with respect to random boolean functions. Correspondingly, across 3200 randomly-generated $k$-sparse boolean functions ($k$-juntas), we find that transformers tend to perform worse at noise-robust learning as the difference $(\sens[f] - \sens[f_N^*])$ grows. This is despite -- or because of -- achieving near-optimal validation error with minimal overfitting, and demonstrates a correlation between the transformer's worsening performance and decreasing sensitivity of the optimal solution for noisy data. However, this relationship is potentially confounded as the difference $(\err_f(f) - \err_f(f_N^*))$ also grows, which will make the transformer less likely to learn $f$ via loss-minimization regardless of $\sens[f]$. 

To isolate effect of simplicity bias on transformers' and LSTMs' ability to do noise-robust learning, we design a controlled experiment involving a trap function $f$ such that $\err_f(f_N^*)\approx \err_f(f)$ , but $\sens[f_N^*] \ll \sens[f]$. In this case, a learning algorithm could learn $f$ from noisy data in principle, but will fail if it is biased towards lower sensitivity functions. Indeed, Figs.~\ref{fig:3}(a-c) demonstrate that transformers fail to learn $f$ and instead converge towards the trap function $f_N^*$. LSTMs also fail to learn $f$, but instead due to overfitting on training data. Thus, SANs and LSTMs both struggle with noise-robust learning, but fail to learn $f$ from noisy data for completely different reasons.

\begin{figure}[ht]
    \centering
    \includegraphics[width=\textwidth]{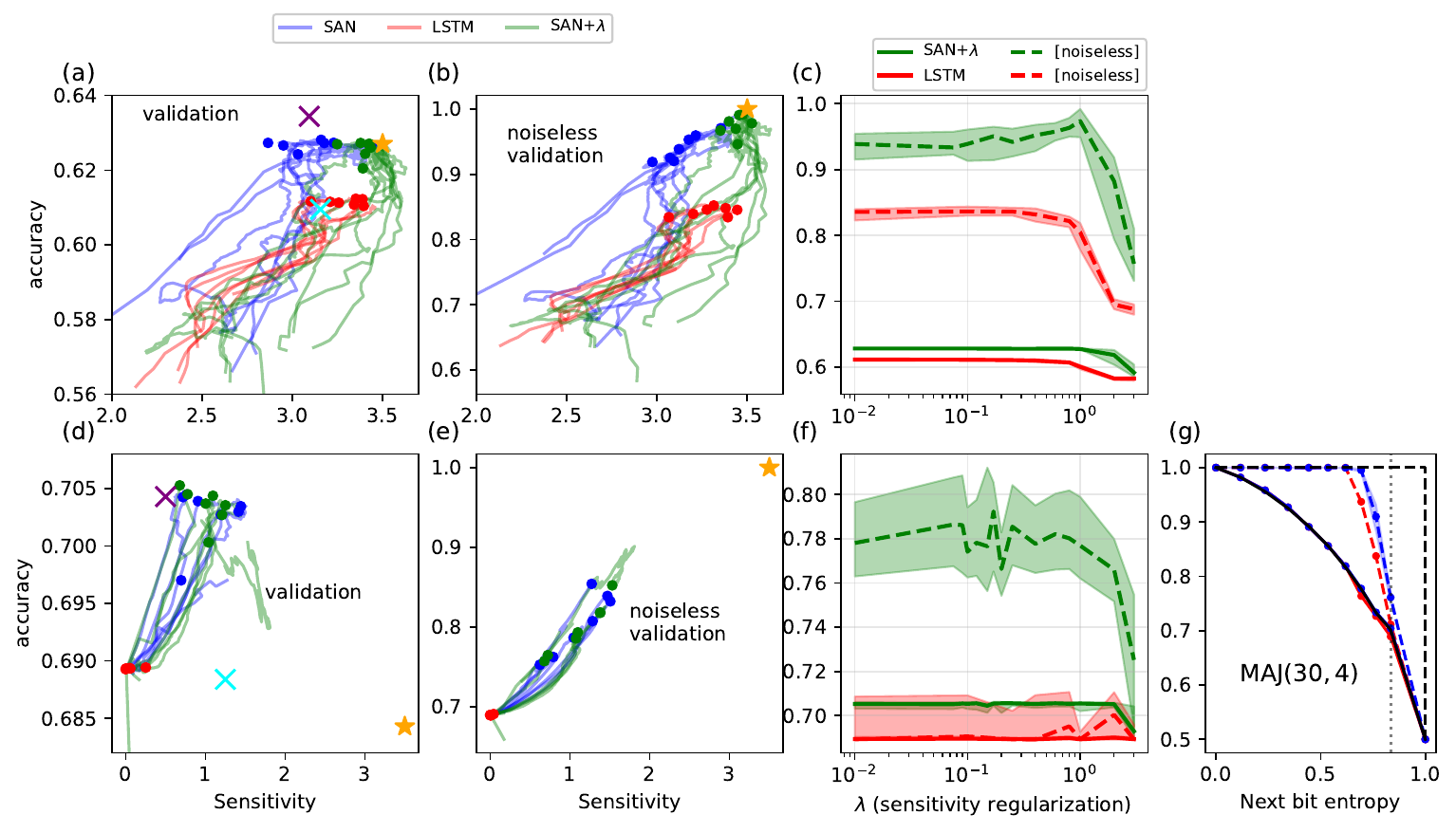}
    \caption{\textbf{LSTMs and transformers fail to learn $f$ from training data with feature noise in distinct ways.} (a)-(b) We consider a particular trap function $f$ such that $\err_f(f_N^*) \approx \err_f(f)$, while $\sens[f_N^*] \ll \sens[f]$. Blue and red lines show (smoothed) training dynamics of transformers and LSTMs trained on noisy inputs across a variety of hyperparameters and initializations. Each point represents a (learned) boolean function. Transformers approach optimal validation accuracy (\purplex) while RNNs perform no better than memorization of training data (\cyanx), and both models fail to learn $f$ (\orangestar). However, an explicit sensitivity penalty in the loss function $\lambda \sens[\hat{f}]$ ($\lambda = 1$) allows transformers to learn $f$ (green lines) (c) There is a clear optimum $\lambda$ for learning $f$ from noisy data using sensitivity penalty in the loss. (d-f) This behavior does not extend to functions where $\err(f_N^*) \ll \err_f(f)$, for example $\maj(30, 4)$ with $p=0.32$, for which $f_N^*$ is a heavily biased function. (g) Overall, transformers do not outperform LSTMs at learning $\maj(n, k)$ with even $n$ (shown: $\maj(30, 4)$) with feature noise. See Appendix~\ref{app:exp3} for additional details.}
    \label{fig:3}
\end{figure}

A neural network might be capable of escaping the trap and learning $f$ if we were to replace its preference for simple functions with a preference for \textit{complex} functions. To test this possibility, we ran learning experiments for the same trap function but with an additional term in the loss function that penalizes low-sensitivity solutions. We add an additional loss term approximating $-\lambda \sens[\hat{f}]$ at each training step, which achieves a similar effect to penalizing high-weight components of the function's Fourier spectrum \citep{gorji23a}. For a narrow choice of $\lambda$, transformers were capable of escaping the learning trap with this complexity bias (Fig.~\ref{fig:3}c), though this outcome depends on a good choice for $\lambda$, which may not be practical to optimize. Furthermore, a learning algorithm is unlikely to learn $f_N^*$ when $\err_f(f_N^*) \ll \err_f(f)$, even with a strong complexity bias. To demonstrate, we return to $f:=\maj(n, k)$ ($n$ even), for which $f_N^* \neq f$ in general since $f$ is imbalanced. Fig.~\ref{fig:3}(d-f) shows that transformers typically fail to learn $\maj(30, 4)$ from noisy features (for many penalty parameters $\lambda$), while at the same time LSTMs become more capable of learning the lower-sensitivity $f_N^*$. As a result, the performance gap between the median SAN and best LSTM for learning $\maj_n$ with even $n$ almost completely disappears (Fig.~\ref{fig:3}e). 

\section{Discussion}\label{sec:discussion}

\textbf{Limitations:} Our noise model and input data are limited in several ways: We have only considered noise in the form of independent bitflip errors on uniformly random input bitstrings, and it is unclear if our findings would extend to other forms of noise. Many of the effects we observed depend on the condition that $f_N^* \neq f$, which in turn requires relatively high noise rates that may not be present in natural datasets. While we have not considered label noise, previous analyses suggest that this would not affect the qualitative behavior of our models \citep{bhattamishra_simplicity_2023}. 

Here we have studied independent and identically distributed bitflip noise, a toy model for noise occurring in real world data. Our observations should extend to other noise processes: For example, the mechanism by which $T_\rho$ decreases the sensitivity of a boolean function does not depend on identically distributed errors, and so the inequality $\sens[f_N^*] \leq \sens[f]$ should hold with high probability even for non-iid noise models. But here is no reason to expect that this inequality will hold for noise models with arbitrary correlations between bitflips (see Appendix~\ref{app:non_iid}). Moreover, bitflips lead to a qualitatively different form of noise than randomness introduced by generating bit sequences recurrently according to some stochastic process. Future work will be necessary to determine the extent to which complex and realistic noise models obey $\sens[f] \geq \sens[f_N^*]$.

Our experiments with noise-robust learning involved randomly-generated $k$-juntas, which may not be representative of real noise-robust learning problems. Our trapping experiment using sensitivity penalties succeeded under tightly controlled conditions, though this experiment is provided only as a demonstration and this particular strategy may fail in general. It is likely that more sophisticated techniques will be needed to improve transformers' ability to do noise-robust learning when $f_N^*$ is much more accurate than $f$.

\textbf{Future work:} Our results suggest that transformers' simplicity bias has practical, negative consequences for learning boolean functions from training data with noisy features. Do these consequences extend to domains with more complex inputs such as natural language processing? In certain natural language tasks like modular arithmetic \citep{liu2022towards}, one hopes that transformers can learn input-sensitive discrete functions after training on text that contains both errors and stochasticity (i.e. entropy of written language). Our findings suggest that this may be impossible for sufficiently high-entropy inputs, since optimal prediction on noisy features corresponds to a low complexity predictor on the training data. Our work complements recent observations that LLMs tend to perform poorly at math when exposed to noise \textit{at evaluation time} \citep{pmlr-v202-shi23a,arithmattack}, and suggests a need for further experiments to examine how noise and stochasticity in training data affect LLMs reasoning abilities at evaluation time. Indeed, the mismatch between $f$ and $f_N^*$ in our experiments suggests that reducing model loss to the next-token conditional entropy  may actually be detrimental for learning precise concepts from natural text.

\subsection{Conclusion}

We have shown that transformers outperform LSTMs for learning sparse parities and (odd-length) sparse majorities in the presence of feature noise. But we show that transformers fail at noise-robust learning of boolean functions more generally, and use controlled experiments with modified loss functions to connect this failure specifically to transformers' bias towards learning simple boolean functions. Our analysis suggests that transformers may be particularly unsuitable for learning sensitive functions in the presence of feature noise.

\subsubsection*{Code availability}

Code to reproduce experiments an analysis is available on Github: \url{https://github.com/peterse/noise-robust-boolean-learning/}.

\subsubsection*{Acknowledgments}

We thank Cyril Goutte and Jackie Lo for helpful discussions. This work was supported by the Applied Quantum Computing Challenge Program at the National Research Council of Canada. Research at Perimeter Institute is supported in part by the Government of Canada through the Department of Innovation, Science and Economic Development and by the Province of Ontario through the Ministry of Colleges and Universities. This work was supported in part by the Dieter Schwarz Foundation.

\bibliography{iclr2026_conference}
\bibliographystyle{iclr2026_conference}

\clearpage
\begin{appendices}
{\section*{\huge \centering Appendices}}
\input{appendix}
\end{appendices}

\end{document}

%% file: mymacros.tex
\definecolor{iconpurple}{HTML}{7B1FA2} 
\newcommand{\purplex}[1][1]{%
  \tikz[baseline=-0.6ex, x=1ex, y=1ex, scale=#1]{
    \draw[line width=0.2ex,  iconpurple] (-0.75,-0.75) -- (0.75,0.75);
    \draw[line width=0.2ex, iconpurple] (-0.75,0.75) -- (0.75,-0.75);
  }%
}

\definecolor{iconcyan}{HTML}{00BCD4} 
\newcommand{\cyanx}[1][1]{%
  \tikz[baseline=-0.6ex, x=1ex, y=1ex, scale=#1]{
    \draw[line width=0.2ex, iconcyan] (-0.75,-0.75) -- (0.75,0.75);
    \draw[line width=0.2ex, iconcyan] (-0.75,0.75) -- (0.75,-0.75);
  }%
}

\theoremstyle{definition}

\newtheorem{theorem}{Theorem}
\newtheorem*{theorem*}{Theorem}

\newtheorem*{observation*}{Observation}

\newtheorem*{conjecture*}{Conjecture}
\newtheorem{lemma}[theorem]{Lemma}
\newtheorem*{lemma*}{Lemma}

\newtheorem*{corollary*}{Corollary}
\newtheorem{proposition}[theorem]{Proposition}

\DeclareMathOperator{\sens}{I}
\DeclareMathOperator{\sensi}{Inf_i}
\DeclareMathOperator{\maj}{\textsc{maj}}
\DeclareMathOperator{\parity}{\textsc{parity}}
\DeclareMathOperator{\dict}{\textsc{dict}}
\DeclareMathOperator{\err}{err}

\newcommand{\optset}{\mathcal{F}_N^*}

\DeclareMathOperator*{\wt}{wt}

\newcommand{\I}{\mathbb{I}} 
\newcommand{\E}{\mathbb{E}} 

\DeclareMathOperator*{\sign}{sign}

\newcommand{\trho}{T_\rho}

\newcommand{\X}{\mathcal{X}}
\newcommand{\Y}{\mathcal{Y}}

\DeclareMathAlphabet{\mathpzc}{OT1}{pzc}{m}{it}

\DeclareMathOperator{\HHH}{H} 

%% file: appendix.tex
\section{Background}
\label{sec:details_bool}

\subsection{Boolean analysis}
Unless otherwise noted, we will always consider uniform distributions of bitstrings $x\in\{0,1\}^n$, i.e. $p_X(x) = 2^{-n}\, \forall \, x$. We can represent bitflips using an iid bitflip random variable $E = (E_1, \dots, E_n)$ where each $E_i$ takes values in $\{0,1\}$ with $\Pr(E_i=1) = p$. This means that $\Pr(\wt(E)=w) = (1-p)^{n-w}p^w$, for instance. Then, $Z := X \oplus E$ represents the string $X$ that has undergone a bitflip at each location where $E$ is nonzero. The variables $X$ and $Z$ are not independent, but $X$ and $E$ are independent (written $X\perp E$). We can compute relationships of $X$ and $Z$ using this fact, for example:
\begin{align}
    p_{X|Z}(x|z) &= \frac{\Pr_{X,Z}(X=x, Z=z)}{p_X(x)} 
    \\&= \frac{\Pr_{X,E}(X=x, E=x\oplus z)}{p_X(x)} 
    \\&= p_E(x\oplus z)
\end{align}

The sensitivity of $f$ at $x$ is defined as the number of bit positions such that a single bitflip of $x$ changes the value of $f$:
    \begin{equation}
        s(f, x):= \sum_{i=1}^n \I\{f(x) \neq f(x \oplus e_i)\}.
\end{equation}
On the other hand, if we fix a single bit $i$, then the influence of bit $i$ is defined as
\begin{equation}
    \sensi[f]:=\Pr_{x}(f(x) \neq f(x^{\oplus i})).
\end{equation}
The average sensitivity (or ``total influence'') of $f$ is then defined as the average sensitivity over all inputs
\begin{equation}
    \sens[f]:= \E_{x }[s(f, x)] = \frac{1}{2^n} \sum_{x \in \{0,1\}^n} s(f, x) = \sum_i \sensi[f].
\end{equation}    
This takes values in the range $[n]$, and thus $\sens[f]/n$ gives the likelihood over random inputs that a single bitflip changes the output value of $f$.

If we alternatively represent bits $\{0, 1\}$ as $\{1, -1\}$, we represent noise by considering pairs $(x, z)$ with $x$ sampled uniformly at random from $\{-1, 1\}^n$ and $z$ sampled conditionally such that each bit satisfies $\E [x_i z_i] = \rho$. In this case, we define the noise operator $T_\rho$ according to $T_\rho f(x):= \E_{z\sim_\rho x}[f(z)]$, where $z \sim_\rho x$ denotes sampling $z$ conditionally on $x$ in the way described above. 

We will sometimes use Fourier analysis of Boolean. For a subset $S \subseteq[n]$, the Fourier coefficient of $f: \{1, -1\}^n \rightarrow \mathbb{R}$ at subset $S$ is defined
\begin{equation}
    \hat{f}(S):=\sum_{x \in \{-1, 1\}^n} f(x) \chi_S(x)
\end{equation}
where $\chi_S(x):= \prod_{i \in S} x_i$.

\subsection{Information theory}\label{app:info_theory}

We briefly introduce concepts from information theory, in order to motivate how next-bit conditional entropy closely describes the hardness of next-bit prediction. Consider a random variable $X$ taking values $x \in \X$ with probability $p_X(x)$. The entropy of $X$ is defined as 
\begin{equation}
    \HHH(X) = -\sum_{x \in \X} p_X(x) \log p_X(x).
\end{equation}
For a joint distribution $p_{X_1 X_2}$ over the pair of random variables $(X_1, X_2)$, the \textit{conditional entropy} of $X_2$ given $X_1$ is
\begin{equation}
    \HHH(X_2 | X_1) = \HHH(X_1 X_2) - \HHH(X_1) = -\sum_{x_1,x_2}p_{X_1 X_2}(x_1, x_2) \log p_{X_2| X_1}(x_2| x_1).
\end{equation}

The following theorem from Ref. \cite{feder_relations_1994} shows how the error of a Bayes-optimal next-token predictor is tightly controlled by the entropy of the next bit:
\begin{theorem} \textbf{(Feder, 1994) }\label{lemma:feder}
 Define the piecewise function $\phi_N: [0, 1] \rightarrow \mathbb{R}$ and $\Phi_N: [0,1]\rightarrow \mathbb{R}$ as  
 \begin{align}
     \phi_N(\lambda) &= \begin{cases}
         a_k (\lambda - \frac{k-1}{k}) + \log (k), & \frac{k-1}{k} \leq \lambda \leq \frac{k}{k+1}
     \end{cases} \\
      \Phi_N(\lambda) &= h_2(\lambda) + \lambda \log(N - 1)
 \end{align}
with $a_k = k(k+1) \log ( (k + 1)/k)$ for $k=1 \dots N$. Let $X$ and $Y$ be random variables  be a random variable taking values in alphabets $\X$ and $\Y$ respectively, with $|\X| = N$. Define the error probability of an optimal estimator as $p_e^*(Y|X) := \min_{\hat{Y}: \X \rightarrow \Y} \Pr(\hat{Y}(X) \neq Y)$. Then,
\begin{equation}
   \Phi_N(p_e^*(Y|X))  \geq H(Y|X) \geq \phi_N(p_e^*(Y|X))
\end{equation}
\end{theorem}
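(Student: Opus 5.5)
The plan is to reduce both inequalities to a single-input statement and then average over the input. I read the statement with $N$ as the size of the alphabet of the predicted variable $Y$; the bounds only involve the conditional distributions of $Y$, and $\Phi_N$, $\phi_N$ are defined on $[0,(N-1)/N]$.

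\textbf{Reduction to a fixed input.} Fix $x$ and let $q_x := p_{Y|X}(\cdot|x)$. Set $\pi(x) := \max_y q_x(y)$ and $e(x) := 1-\pi(x)$. The optimal estimator is the MAP rule $\hat Y(x) = \arg\max_y q_x(y)$, so
\[
p_e^*(Y|X) = \E_x[e(x)], \qquad \HHH(Y|X) = \E_x[\HHH(q_x)].
\]
Since $q_x$ lives on $N$ symbols, $\pi(x) \ge 1/N$, so every $e(x)$ lies in $[0,(N-1)/N]$. Both bounds then follow from two steps:
\begin{itemize}
\item a pointwise bound $\phi_N(e(x)) \le \HHH(q_x) \le \Phi_N(e(x))$ for each $x$;
\item Jensen's inequality, using that $\Phi_N$ is concave and $\phi_N$ is convex.
\end{itemize}

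\textbf{Upper bound (Fano).} For a fixed $x$, group the outcomes into the MAP symbol, with mass $1-e$, and the remaining $N-1$ symbols, with total mass $e$. The chain rule gives
\[
\HHH(q_x) \le h_2(e) + e\,\HHH(\text{rest}\mid\text{not MAP}) \le h_2(e) + e\log(N-1) = \Phi_N(e).
\]
$\Phi_N$ is concave because $h_2$ is concave and $e\log(N-1)$ is linear. Jensen's inequality then gives
\[
\HHH(Y|X) = \E_x[\HHH(q_x)] \le \E_x[\Phi_N(e(x))] \le \Phi_N(\E_x[e(x)]) = \Phi_N(p_e^*).
\]

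\textbf{Lower bound.} For fixed $\pi$, I will determine the minimum of $\HHH(q)$ over distributions $q$ with $\max_y q(y) = \pi$. Let $k = \lfloor 1/\pi \rfloor$. I claim the minimizer is the greedy distribution $q^\pi$, which puts mass $\pi$ on each of $k$ atoms and the remainder $1-k\pi$ on one further atom.
\begin{itemize}
\item \emph{Minimizer.} Every distribution with maximum at most $\pi$ is majorized by $q^\pi$: its partial sums of sorted masses are at most $\min(j\pi,1)$, and $q^\pi$ attains these. Entropy is Schur-concave, so $q^\pi$ has minimal entropy.
\item \emph{Shape of the minimal entropy.} Write $e = 1-\pi$. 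On the interval $\frac{k-1}{k} \le e \le \frac{k}{k+1}$, which corresponds to $\pi \in [\frac{1}{k+1},\frac1k]$, the minimal entropy is
\[
\psi(e) = -k\pi\log\pi - (1-k\pi)\log(1-k\pi).
\]
This is concave in $\pi$, since it is a sum of concave functions of affine maps of $\pi$.
\item \emph{Endpoints.} At $e = (k-1)/k$ we have $\pi = 1/k$, so $q^\pi$ is uniform on $k$ atoms and $\psi = \log k$. At $e = k/(k+1)$ we get $\psi = \log(k+1)$.
\end{itemize}
Because $\psi$ is concave on each segment, it lies above the chord joining the two endpoints. The chord's slope is $\log\frac{k+1}{k} \big/ \left(\frac{k}{k+1}-\frac{k-1}{k}\right) = k(k+1)\log\frac{k+1}{k} = a_k$, so the chord is exactly $\phi_N$. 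Hence $\HHH(q_x) \ge \psi(e(x)) \ge \phi_N(e(x))$.

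To finish I need $\phi_N$ to be convex. It is continuous and piecewise linear, so it suffices that the slopes $a_k$ increase in $k$. This holds because $t \mapsto \log(1+t)/(t(1+t))$ is decreasing, which can be checked by differentiating. Jensen's inequality then gives
\[
\HHH(Y|X) = \E_x[\HHH(q_x)] \ge \E_x[\phi_N(e(x))] \ge \phi_N(\E_x[e(x)]) = \phi_N(p_e^*).
\]

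\textbf{Main obstacle.} I expect the hardest step to be the extremal characterization: that the greedy distribution minimizes entropy under a fixed maximum, together with concavity of $\psi$ on each segment. I would handle it through majorization as above, and verify the monotonicity of $a_k$ directly.
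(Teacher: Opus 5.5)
The paper does not prove this theorem; it quotes it from Feder and Merhav (1994). Your argument is correct and is essentially the standard proof from that source. For the upper bound you use Fano's grouping bound, then concavity of $\Phi_N$ and Jensen. For the lower bound you use three ingredients: the staircase distribution as the entropy minimizer at fixed $\max_y q(y)$, its lower convex envelope obtained by joining the points $\bigl(\tfrac{k-1}{k},\log k\bigr)$, and Jensen again. Your reading of $N$ as the alphabet size of $Y$ is necessary, not just convenient. With $|\X|=N$ as printed, the upper bound fails: take $|\X|=2$ and $Y$ uniform on four symbols independent of $X$, so that $\HHH(Y|X)=\log 4 > h_2(3/4)=\Phi_2(p_e^*)$.

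One small slip is in the monotonicity of the slopes. Under $t=1/k$, the function $\log(1+t)/(t(1+t))$ equals $\frac{k^2}{k+1}\log\frac{k+1}{k}$, not $a_k$; the correct reparametrization is $a_k=(1+t)\log(1+t)/t^2$. A cleaner way to see that the $a_k$ increase is to note that the breakpoints $\bigl(1-\tfrac1k,\log k\bigr)$ lie on the convex curve $e\mapsto-\log(1-e)$. Consecutive chord slopes of a convex function are nondecreasing, so $\phi_N$ is convex without any calculus.
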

Evidently, the optimal noisy generalization error $\err_f(f_N^*)$ is close to zero if and only if the corresponding entropy $\HHH(f(X)|Z)$ is close to zero, and $\err_f(f_N^*)$ approaches $1/2$ as $\HHH(f(X)|Z)$ approaches $1$.

\subsection{Optimal next-bit prediction}

In this section, we show that $f_N^*$ from Eq.~\ref{eq:err_fg} minimizes $\err_f$, and provide a combinatorial proof for part of Proposition~\ref{prop:sens_bias} as an alternative to Ref.~\cite{weinbergerSelfPredictingBooleanFunctions2019}. We will always assume a uniform distribution of input bitstrings $x\in\{0,1\}^n$, and a conditional distribution for noisy bitstrings $z$ based on i.i.d. bitflips applied to each bit of $x$ with probability $p$. The set of (Bayes-)optimal predictors $\optset(f)$ for a set of boolean-labeled data with corrupted inputs $\{(Z_i, f(X_i))\}$ is given by all $g$ satisfying
\begin{equation}\label{eq:condition_for_optimality}
    \Pr_{x,z} (g(z)=f(x)) \geq \Pr_{x,z} (h(z) = f(x))
\end{equation}
for all boolean functions $h$. We will now show that $f_N^*$ defined in Eq.~\ref{eq:fnstar_lowest_error} of the main text is an optimal predictor:
\begin{lemma}\label{lemma:opt}
    For any symmetric bitflip rate $p \in [0,0.5)$, then for $\rho = 1-2p$ we have
    \begin{equation}
        f_N^*:= \sign(\trho f) \in \optset(f)
    \end{equation}
\end{lemma}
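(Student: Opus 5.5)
This is the standard Bayes-optimality argument. I will work in the $\{1,-1\}$ representation, so that $f$ and any predictor $g$ take values in $\{1,-1\}$ and $\rho = 1-2p = \E[x_i z_i]$.

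\textbf{Step 1: rewrite the success probability.} For $\pm1$-valued $g$ and $f$, $\I\{g(z)=f(x)\} = (1+g(z)f(x))/2$, so
\begin{equation*}
\Pr_{x,z}(g(z)=f(x)) = \tfrac12 + \tfrac12\,\E_{x,z}[g(z)f(x)].
\end{equation*}
Maximizing success over $g$ is therefore the same as maximizing the correlation $\E_{x,z}[g(z)f(x)]$.

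\textbf{Step 2: condition on $z$.} I condition on the noisy string, $\E_{x,z}[g(z)f(x)] = \E_z\bigl[g(z)\,\E_{x|z}[f(x)]\bigr]$. The key observation is that the joint law of $(x,z)$ is symmetric. By the computation in the appendix, $p_{X|Z}(x|z) = p_E(x\oplus z)$. The noise distribution depends only on $x\oplus z$, and $x$ is uniform, so $z$ is also uniform. Conditioned on $z$, the string $x$ is obtained from $z$ by independent flips with probability $p$, so that $\E[x_i z_i]=\rho$. Hence $\E_{x|z}[f(x)] = \trho f(z)$, and
\begin{equation*}
\E_{x,z}[g(z)f(x)] = \E_z[g(z)\,\trho f(z)].
\end{equation*}
This is the step that most needs care: the definition of $\trho$ conditions $z$ on $x$, whereas here $x$ is conditioned on $z$, and the symmetry of the bitflip channel together with uniformity is what makes the two agree.

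\textbf{Step 3: pointwise maximization.} Since $z$ is uniform, it suffices to maximize $g(z)\,\trho f(z)$ separately for each $z$ over $g(z)\in\{\pm1\}$. The maximum is $|\trho f(z)|$, attained by $g(z) = \sign(\trho f(z))$. So for every Boolean $h$,
\begin{equation*}
\E_z[h(z)\trho f(z)] \le \E_z|\trho f(z)| = \E_z[f_N^*(z)\trho f(z)].
\end{equation*}
Combined with Step 1, this gives the optimality condition in Eq.~\ref{eq:condition_for_optimality}, so $f_N^* \in \optset(f)$.

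\textbf{Ties.} When $\trho f(z)=0$, every choice of $g(z)$ gives the same contribution, so whatever convention is used for $\sign(0)$ the bound still holds. This is consistent with the non-uniqueness of optimal predictors noted in the main text. The restriction $p<1/2$ is only needed so that $\rho>0$ and the noise operator is non-degenerate; the argument itself goes through for any $\rho$.
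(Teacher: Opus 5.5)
Your proof is correct and follows essentially the same route as the paper: both identify $\trho f(z)$ with the posterior mean $\E[f(X)\mid Z=z]$ via the channel symmetry $p_{X|Z}(x|z)=p_{Z|X}(z|x)$, and then take its sign. Your correlation rewrite and pointwise maximization in Steps 1 and 3 also prove the Bayes-rule characterization that the paper simply asserts as ``equivalent'' to Eq.~\ref{eq:condition_for_optimality}, and your treatment of ties is cleaner than the paper's ``ignoring the $\trho f = 0$ case.''
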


\begin{proof}Eq.~\ref{eq:condition_for_optimality} is equivalent to 
    \begin{equation}
        g(z) = \begin{cases}
        1, & \Pr_x (f(x)=1|Z=z) \geq 1/2\\
        -1, & \text{else}
    \end{cases}
\end{equation}
Compare this to 
\begin{align}
    \trho f(x) &:= \E_{z \sim N_\rho(x)}[f(z)]
    \\&= \sum_z p(z|x) f(z) \\
    &= \sum_{z: f(z)=1} p(z|x) - \sum_{z: f(z)=-1} p(z|x)
    \\&= \Pr_z (f(z)=1|X=x) - \Pr_z (f(z) = -1 | X=x)
    \\&= \E_z[f(z) |X=x]
\end{align}
Since $p_{Z|X}(z|x) = p_{X|Z}(x|z)$, we have $\trho f(z) = \E_x[f(x)|Z=z]$. And so, ignoring the $\trho f = 0$ case\, 
\begin{align}
        \sign(\trho f(z) ) =  1 \Leftrightarrow \trho f(z) > 0 \Leftrightarrow \Pr_x (f(x)=1|Z=z) > 1/2 \Leftrightarrow g(z) = 1  
\end{align}
\end{proof}
For the interested reader, we provide a simple combinatorial proof for part of Proposition~\ref{prop:sens_bias} of \cite{weinbergerSelfPredictingBooleanFunctions2019}.

\begin{lemma}\label{lemma:generator}
Fix a boolean function $f$ and generate $(X, Z)$ according to the bitflip scheme above. Then, 
\begin{equation}\label{eq:cond22}
    \Pr_{X|Z=z} (f(z) = f(X)) \geq \Pr_{X|Z=z} (f(z) \neq f(X)),
\end{equation}
for all $z$ implies that $f \in \optset(f)$. 
\end{lemma}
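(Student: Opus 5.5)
The plan is to reduce the claim to the pointwise characterization of Bayes-optimal predictors already used in the proof of Lemma~\ref{lemma:opt}. The noisy generalization error decomposes over the observed noisy string:
\begin{equation*}
    \Pr_{X,Z}(g(Z)= f(X)) = \sum_{z} p_Z(z)\, \Pr_{X|Z=z}\big(g(z) = f(X)\big).
\end{equation*}
The value $g(z)$ is a single bit chosen separately for each $z$, so the sum is maximized by maximizing each summand independently. For a fixed $z$, the choice $g(z)=b$ earns $\Pr_{X|Z=z}(f(X)=b)$. A boolean function $g$ therefore lies in $\optset(f)$ if and only if, for every $z$ with $p_Z(z)>0$ (here all $z$, since $p_X$ is uniform), $g(z)$ is a maximizer of $b\mapsto \Pr_{X|Z=z}(f(X)=b)$. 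Ties are allowed, because either choice attains the same maximum.

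Next, specialize to $g=f$. For each fixed $z$, the bit $f(z)$ is a constant, and the two alternatives $b=f(z)$ and $b\neq f(z)$ give the probabilities $\Pr_{X|Z=z}(f(z)=f(X))$ and $\Pr_{X|Z=z}(f(z)\neq f(X))$. These sum to one. Hypothesis~\eqref{eq:cond22} says precisely that choosing $b=f(z)$ is at least as good as the only alternative. Hence $f(z)$ maximizes the summand at every $z$.

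Summing over $z$ then gives $\Pr_{X,Z}(f(Z)=f(X)) \geq \Pr_{X,Z}(h(Z)=f(X))$ for every boolean $h$. This is exactly condition~\eqref{eq:condition_for_optimality}, so $f\in\optset(f)$. Equivalently, $\err_f(f)=\err_f(f_N^*)$, because $f_N^*$ also maximizes each summand (up to ties), as shown in Lemma~\ref{lemma:opt}.

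There is no real obstacle; the argument is essentially a restatement. The only step needing care is the tie case $\Pr_{X|Z=z}(f(X)=1)=1/2$. There $f(z)$ may disagree with $\sign(T_\rho f(z))$, so we cannot claim $f=f_N^*$, only that $f$ achieves the same error. This is exactly why the lemma concludes membership in $\optset(f)$ rather than equality with $f_N^*$. If desired, one can also note that $p_{X|Z}(x|z)=p_E(x\oplus z)$, so the hypothesis can be checked combinatorially by counting, weighted by $p^{\wt(e)}(1-p)^{n-\wt(e)}$, the error patterns $e$ for which $f(z\oplus e)=f(z)$.
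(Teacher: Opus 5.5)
Your proof is correct and is essentially the paper's argument. The paper also conditions on $Z=z$, writes $g$ via its agreement set $\mathcal{P}(f,g)$ with $f$, and uses the hypothesis on the complement $\mathcal{P}(f,g)^c$ to bound those summands, which is exactly your per-$z$ maximization; your remark that ties yield only $f\in\optset(f)$ rather than $f=f_N^*$ is a correct clarification.
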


\begin{proof}
We will show that the condition in Eq.~\ref{eq:cond22} is sufficient for optimal prediction on noisy $Z$ by showing that no other boolean function can do better than $f$ when this condition holds. For any particular function $g$, define the set of inputs on which $g$ agrees with $f$ as
\begin{equation}
    \mathcal{P}(f, g):= \{ z \in \{0,1\}^n: g(z) = f(z)\},
\end{equation}
We may rewrite the output of $g$ as
\begin{equation}
    g(z) = \begin{cases}
        f(z), & z \in \mathcal{P}(f, g) \\
        \neg f(z), & z \in \mathcal{P}(f, g)^c
    \end{cases}
\end{equation}
Thus we find
\begin{align}
    \Pr_{X,Z}(g(Z) = f(X)) &= \Pr_Z \Pr_{X|Z} (g(Z) = f(X))
    \\&= \frac{1}{2^n} \sum_{z\in\{0,1\}^n}  \Pr_{X|Z=z} (g(z) = f(X)) \nonumber
    \\&= \frac{1}{2^n} \sum_{z\in\mathcal{P}(f, g)}  \Pr_{X|Z=z} (f(z) = f(X)) + \frac{1}{2^n}\sum_{z\in\mathcal{P}(f, g)^c}  \Pr_{X|Z=z} (\neg f(z) = f(X)) \nonumber
    \\&= \frac{1}{2^n} \sum_{z\in\mathcal{P}(f, g)}  \Pr_{X|Z=z} (f(z) = f(X)) + \frac{1}{2^n}\sum_{z\in\mathcal{P}(f, g)^c}  \Pr_{X|Z=z} ( f(z) \neq f(X)) \nonumber
    \\&\leq \frac{1}{2^n} \sum_{z\in\mathcal{P}(f, g)}  \Pr_{X|Z=z} (f(z) = f(X)) + \frac{1}{2^n}\sum_{z\in\mathcal{P}(f, g)^c}  \Pr_{X|Z=z} ( f(z) = f(X)) \nonumber
    \\&= \Pr_{X, Z} (f(Z) = f(X)) 
\end{align}
where the final inequality follows from Eq.~\ref{eq:cond22}, and we have used the fact that the marginal distribution of $Z$ is uniformly random whenever the distribution of $X$ is uniformly random.
\end{proof}
Note that Eq.~\ref{eq:cond22} for optimality at noisy prediction is equivalent to 
\begin{equation}\label{eq:cond23}
    \Pr_{e} (f(z) = f(z \oplus e)) \geq \frac{1}{2}
\end{equation}
Consider the dictator function $\dict$, which outputs the value of the first bit. It immediately follows that for $f = \dict$, $f_N^* = \dict$, since 
\begin{equation}
\Pr_{e}(\dict(z) = \dict(z \oplus e)) = 1 - \epsilon \geq \frac{1}{2}.
\end{equation}
We will next show that Eq.~\ref{eq:cond23} holds for $\parity$ as well.
\begin{proposition}[$\parity$ is optimal for predicting noisy $\parity$]\label{prop:parityfnstar}

\end{proposition}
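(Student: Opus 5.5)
The plan is to verify the sufficient condition of Lemma~\ref{lemma:generator} in its equivalent form Eq.~\ref{eq:cond23}. For $f=\parity$ (or a sparse parity on a subset $S$ of size $k$) and every fixed $z$, I will show that $\Pr_e(\parity(z)=\parity(z\oplus e))\geq 1/2$. Lemma~\ref{lemma:generator} then immediately gives $\parity\in\optset(\parity)$.

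The first step is to use linearity of parity over $\mathbb{F}_2$. We have $\parity(z\oplus e)=\parity(z)\oplus\parity(e)$, so the event $\parity(z)=\parity(z\oplus e)$ is exactly the event that $\wt(e)$ (restricted to $S$ in the sparse case) is even. This removes all dependence on $z$. The condition reduces to a single inequality about the error distribution: $\Pr(\wt(E)\text{ even})\geq 1/2$.

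The second step computes this probability with the standard generating-function trick. Since $\E[(-1)^{E_i}]=1-2p$ and the $E_i$ are independent, we get $\E[(-1)^{\wt(E)}]=(1-2p)^k$. Therefore
\begin{equation}
\Pr(\wt(E)\text{ even})=\frac{1+(1-2p)^k}{2}.
\end{equation}
For $p\in[0,1/2]$ we have $(1-2p)^k\geq 0$, so this probability is at least $1/2$, with equality only at $p=1/2$. Equivalently, in the $\pm1$ notation, $T_\rho\chi_S=\rho^{|S|}\chi_S$, so $\sign(T_\rho\chi_S)=\chi_S$ whenever $\rho>0$. This gives $f_N^*=f$ directly from Lemma~\ref{lemma:opt}, and I can present it as a one-line cross-check.

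There is no real obstacle here. The only points needing care are, first, checking that Eq.~\ref{eq:cond23} is indeed equivalent to Eq.~\ref{eq:cond22}: this holds because $X=z\oplus E$ with $E$ independent of $Z$ when $X$ is uniform, and because bitflips are symmetric. Second, the boundary case $p=1/2$, where every predictor is tied and the inequality holds with equality.
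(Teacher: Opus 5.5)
Your proposal is correct and follows the paper's proof essentially step for step. Linearity reduces Eq.~\ref{eq:cond23} to $\Pr_E(\wt(E)\text{ even})=\frac{1}{2}\bigl(1+(1-2p)^n\bigr)\geq\frac{1}{2}$, and Lemma~\ref{lemma:generator} then finishes the argument. Your Fourier cross-check via $T_\rho\chi_S=\rho^{|S|}\chi_S$ and Lemma~\ref{lemma:opt} is a valid shorter alternative. Your remark that $E$ is independent of $Z$ when $X$ is uniform justifies the equivalence of Eqs.~\ref{eq:cond22} and~\ref{eq:cond23}, which the paper only asserts.
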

\begin{proof}
    We will first show that for any noisy bitstring $z$, the parity of the corresponding noiseless bitstring is most likely to match the parity of $z$. Using linearity and $p<\frac{1}{2}$, we have
    \begin{align}
        \Pr_{X|Z=z} (\parity(Z) = \parity(X)) &:=\Pr_{E} (\parity(z) = \parity(z\oplus E))
        \\&= \Pr_E (\parity(z) = \parity(z) \oplus \parity(E)) 
        \\&= \Pr_E (\parity(E) = 0) 
        \\&= \Pr_E (\wt(E)\text{ is even})
        \\&= \frac{1}{2}(1 + (1-2p)^n)
    \end{align}
The final line is never less than $\frac{1}{2}$ and therefore satisfies Eq.~\ref{eq:cond23}. By Lemma~\ref{lemma:generator} the proof is complete.
\end{proof}
A combinatorial proof also exists showing that $f:=\maj_n$ satisfies $f = f_N^*$ for odd $n$. The proof is not particularly illuminating, so we have excluded it.

\subsection{Proof of Proposition~\ref{prop:new}}\label{app:observation_1}

Here we prove Proposition~\ref{prop:new} and then discuss further the inequality
\begin{equation}\label{eq:obs1}
    \sens[f]\underset{?}{\geq} \sens[f_N^*],
\end{equation}
which holds with high probability for randomly sampled boolean functions (even for small $n$), but does not hold for all boolean functions.

Before proving an asymptotic relationship between the influence of $f_N^*$ and $f$, we will need statistical moments of random boolean functions. Recall that a Rademacher random variable takes values in $\{1, -1\}$ with equal probability. Each Fourier coefficient
\begin{align}
    \hat{f}(S) = \sum_{x \in \{-1, 1\}^n} f(x) \chi_S(x)
\end{align}
is a sum of $2^n$ independent Rademacher variables, since each $f(x)$ is itself Rademacher. As a result, the coefficients are distributed according to $\hat{f}(S)\sim 2^{-n}(\text{Bin}(2^n, \frac{1}{2}) - 2^n)$ where $\text{Bin}$ denotes the binomial distribution. Using standard results for the Binomial distribution yields
\begin{align}
  \E_f[\hat{f}(S)] &= 0 \\ \label{eq:fhatcorr}
  \E_f[\hat{f}(S) \hat{f}(T)] &= \begin{cases}
      2^{-n} & S=T \\ 
      0 & \text{else}
  \end{cases}
\end{align}
and so the Fourier coefficients of a random function are, on average, uncorrelated (they are not independent, as we require $\sum_{S \subseteq [n]}\hat{f}(S)^2 = 1$). We may now prove the following proposition, which leads directly to Proposition~\ref{prop:new} in the main text.
\begin{proposition}
    Let $f: \{1, -1\}^n \rightarrow \{1,-1\}$ be a uniformly random boolean function on $n$ bits, and define $f_N^*:= \sign(T_\rho f)$ for $\rho \in (0, 1)$. Then, for $r(\rho) = \frac{(1-\rho^2)}{(1+\rho^2)}$
    \begin{equation}
       \sens[f_N^*] =  \frac{n}{\pi} \arccos(r(\rho))
    \end{equation}
\end{proposition}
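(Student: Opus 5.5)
The plan is to compute each influence $\sensi[f_N^*]$ separately, in expectation over the random $f$, by a Gaussian approximation. Then sum over $i$ and match constants with the statement. In the $\pm1$ convention, $\sensi[f_N^*]=\Pr_x\big(\sign(\trho f(x))\neq \sign(\trho f(x^{\oplus i}))\big)$. Averaging over $f$ and using the symmetry of the uniform distribution on boolean functions, I will reduce to a fixed $x$: compute $\Pr_f\big(\sign(\trho f(x))\neq\sign(\trho f(y))\big)$ with $y=x^{\oplus i}$. I read the statement as a statement about $\E_f[\sens[f_N^*]]$ in the large-$n$ limit, consistent with Proposition~\ref{prop:new}.

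\textbf{Step 1 (covariance).} Write $\trho f(x)=\sum_z p(z|x) f(z)$. Since the values $f(z)$ are independent Rademacher variables, the pair $(\trho f(x),\trho f(y))$ is a weighted sum of independent mean-zero variables. Its covariance follows either from Eq.~\ref{eq:fhatcorr} via the Fourier expansion $\trho f=\sum_S \rho^{|S|}\hat f(S)\chi_S$, or directly from the weights. It gives
\begin{align}
\E_f[\trho f(x)\,\trho f(y)] &\propto \sum_{S}\rho^{2|S|}\chi_S(x)\chi_S(y).
\end{align}
For $y=x$ this sum is $(1+\rho^2)^n$. For $y=x^{\oplus i}$, the sets containing $i$ flip sign, so the sum is $(1+\rho^2)^{n-1}(1-\rho^2)$. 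The correlation is therefore exactly $r(\rho)=(1-\rho^2)/(1+\rho^2)$, independent of $n$, $x$ and $i$.

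\textbf{Step 2 (Gaussian limit and Sheppard's formula).} I will apply a bivariate Lindeberg CLT, via the Cram\'er--Wold device, to the weighted Rademacher sums. The Lindeberg-type condition needed is that the largest weight is negligible against the $\ell_2$ norm of the weights:
\begin{align}
\frac{\max_z p(z|x)^2}{\sum_z p(z|x)^2}=\frac{(1-p)^{2n}}{\big((1-p)^2+p^2\big)^n}\longrightarrow 0 \quad (p>0).
\end{align}
This also shows that the atom at $\trho f(x)=0$, i.e.\ ties in $\sign$, has vanishing probability. For a centered bivariate Gaussian with correlation $r$, Sheppard's formula gives $\Pr(\sign U\neq\sign V)=\arccos(r)/\pi$. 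Hence $\E_f\sensi[f_N^*]\to\arccos(r(\rho))/\pi$ for each $i$, and summing over $i$ gives $\E_f\sens[f_N^*]\approx\frac{n}{\pi}\arccos(r(\rho))$.

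\textbf{Step 3 (translation to Proposition~\ref{prop:new}).} Substituting $\rho=1-2p$ gives $1-\rho^2=4p(1-p)$ and $1+\rho^2=2(p^2+(1-p)^2)$, so $r=2p(1-p)/(p^2+(1-p)^2)$, which is the formula in Proposition~\ref{prop:new}. For the inequality, note that for uniformly random $f$ each pair $f(x),f(x^{\oplus i})$ consists of independent values. They differ with probability exactly $1/2$, so $\E_f\sens[f]=n/2$. For $p\in(0,1/2]$ we have $r\in(0,1]$, hence $\arccos(r)/\pi<1/2$, which gives Eq.~\ref{eq:sens_rand_f} in the limit.

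\textbf{Main obstacle.} The delicate part is Step 2: making the CLT rigorous uniformly enough to exchange the limit with the expectation over $f$. That requires controlling the Berry--Esseen error for a two-dimensional sum with very unequal weights, plus the tie probability at zero. I would first try the multivariate Berry--Esseen bound. Its error is controlled by $\sum_z p(z|x)^3/(\sum_z p(z|x)^2)^{3/2}$, which decays geometrically in $n$ for fixed $p>0$. The resulting statement is asymptotic in $n$ for fixed $p$ and degrades as $p\to0$.
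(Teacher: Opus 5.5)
Your proposal is correct and follows essentially the same route as the paper's proof. Both compute $\E_f[T_\rho f(x)\,T_\rho f(x^{\oplus i})]$ from the Fourier coefficients of a random function to get correlation $r(\rho)$, approximate the pair by a bivariate Gaussian via the CLT, apply Sheppard's formula, and sum over $i$ by symmetry. Your Lindeberg check, tie control, and reading of the claim as a large-$n$ statement about $\E_f[\sens[f_N^*]]$ make rigorous what the paper leaves informal.

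The obstacle you flag is milder than stated. By translation symmetry, $\E_f[\sensi[f_N^*]]$ equals exactly $\Pr_f$ of sign disagreement at one fixed $x$, so the portmanteau theorem already gives the limit with no interchange of limit and expectation. A quantitative bound such as Berry--Esseen is needed only if you want the error in $\E_f[\sens[f_N^*]]$ itself, rather than in $\E_f[\sens[f_N^*]]/n$, to vanish.
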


\begin{proof}
We will work with bitstrings taking values in $\{1, -1\}^n$. Define $k_\rho(z|x):= \Pr(Z=z|X=x)$ denote the probability for a noisy bitstring $z$ given a $\rho$-correlated input bitstring $x$, i.e. $k_\rho(z|x)=2^{-n}(1-\rho)^{\wt(e)}(1+\rho)^{n-\wt(e)}$, and so $T_\rho f(x) = \sum_{z\in x}k_\rho(z|x) f(z)$. For uniformly random $f$, $f(z)$ is Rademacher, and we define a family of two-dimensional random variables $\{A_z\}$
\begin{equation}
    A_z := \begin{pmatrix}
        k_\rho(z|x) \\ k_\rho(z|x^{\oplus i})
    \end{pmatrix}.
\end{equation}
For any particular $f$, we have $\E_z[A_z] = (T_\rho f(x), T_\rho f(x^{\oplus i}))^T$. By the Central Limit Theorem, we have that $\E_z[A_z]$ approaches a bivariate Gaussian distribution with covariance matrix
\begin{equation}
    \Sigma := \E_z [A_z A_z^T] = \begin{pmatrix}
    \E_z[T_\rho f(x)^2] & \E_z[T_\rho f(x)T_\rho f(x^{\oplus i})] 
    \\ \E_z[T_\rho f(x)T_\rho f(x^{\oplus i})] & \E_z[T_\rho f(x^{\oplus i})^2]
    \end{pmatrix}.
\end{equation}
By symmetry, we only need to compute two terms in this matrix. Recall that
\begin{equation}
    T_\rho f(x) = \sum_{S \subseteq[n]} \rho^{|S|} \hat{f}(S) \chi_S(x),
\end{equation}
and so
\begin{align}
    \E_f [T_\rho f(x) T_\rho f(y)] &= \E_f \left[\sum_{S,T \subseteq[n]} \rho^{|S|+|T|} \hat{f}(S) \hat{f}(T) \chi_S(x)\chi_T(y) \right]
    \\&= \sum_{S,T \subseteq [n]}\rho^{|S| + |T|} 2^{-n} \I\{S=T\} \chi_S(x) \chi_T(y)
    \\&= 2^{-n}\sum_S \rho^{2|S|} \chi_{S}(x+y)
    \\&= 2^{-n} \prod_{i=1}^{n} (1 + \rho^2 x_i y_i)
\end{align}
where we have used Eq.~\ref{eq:fhatcorr} in the second equality. The terms in the covariance matrix $\Sigma$ are then
\begin{align}
    \E_f[T_\rho f(x)^2] = \E_f[T_\rho f(x^{\oplus i})^2]&= 2^{-n}(1 + \rho^2)^n \\
    \E_f [T_\rho f(x) T_\rho f(x^{\oplus i}] &=2^{-n}(1-\rho^2)(1 + \rho^2)^{n-1}.
\end{align}
Thus, we may define a unit-variance variable $A_z' := A_z / \sqrt{2^{-n} (1+\rho^2)^n}$ which converges to a bivariate Gaussian distribution with covariance matrix
\begin{equation}
    \Sigma' = \begin{pmatrix}
        1 & \frac{1-\rho^2}{1+\rho^2}
        \\ \frac{1-\rho^2}{1+\rho^2} & 1
    \end{pmatrix}
\end{equation}
Then, $\Pr_{x}(\sign(T_\rho f(x)) = \sign(T_\rho f(x^{\oplus i})))$ is equal to the probability that both components of $A_z$ are positive. Clearly this probability is unaffected by renormalization, and so we may apply Sheppard's formula (e.g. \cite{o2014analysis}): For standard Gaussian random variables $a_1, a_2$ with $\E[a_1 a_2] = r$, 
\begin{equation}
    \Pr(a_1 \leq 0, a_2 \leq 0) = \frac{1}{2}\left(1 - \frac{\arccos(r)}{\pi} \right).
\end{equation}
Thus, we find that in the limit of large $n$, the average sensitivity of $f_N^*$ for a randomly sampled $f$ obeys
\begin{align}
    \E_f[\sensi[f_N^*]] &= 1 - \Pr_{f}\left(\sign(T_\rho f(x)) = \sign (T_\rho f(x^{\oplus i}))\right)
    \\&= 1 - 2\Pr_f(T_\rho f(x) \geq 0, T_\rho f(x^{\oplus i}) \geq 0)
    \\&\approx \frac{\arccos(r(\rho))}{\pi}.
\end{align}
Where we have invoked symmetry in the distribution of $f$ to remove the averaging over $x$ in the definition of $\sensi$, and further invocation of symmetry gives $\sens[f] = n\sensi[f]$. Substitution of $\rho = 1-2p$ recovers the result from the main text.
\end{proof}

\textbf{Additional observations} We found that the inequality~\ref{eq:obs1} holds for all Boolean functions of $n \leq 4$ bits, and a large number of randomly-sampled boolean functions for larger $n$. For $n\geq 5$, we tested this inequality with the following procedure: (1) sample a uniformly random element of $\{-1,1\}^{2^n}$, (2) generate $f$ from this vector as a hash table, (3) compute $T_\rho f$ in the Fourier domain, (4) compute $\sens[f_N^*]$ and $\sens[f]$ in the Fourier domain. We repeated this procedure for several million boolean functions for $n\in \{5, 6, 7, 8\}$, for bitflip values in $\{0.01, 0.25, 0.49\}$. No randomly sampled function violated inequality~\ref{eq:obs1}.

Despite holding with high probability for random boolean functions, Eq.~\ref{eq:obs1} does not hold for all boolean functions. A counterexample with $\sens[\sign(T_\rho f)] > \sens[f]$ is given by the linear threshold function (LTF) $f(x) = \sign(a_0 + \sum_{i=1}^n a_i x_i)$ with
\begin{equation}
    (a_0, a_1, a_2, a_3, a_4, a_5, a_6) =(0.3, 0.1, 0.1,  0.2 , .3, 0.4, 0.9)
\end{equation}
and $\rho=0.2$. Additional counterexamples may be found via grid search of LTFs over $\rho$ and $(a_i)$ for larger $n$. We are unaware of any characterization of all such functions violating Eq.~\ref{eq:obs1}.

\subsection{Discussion of other noise models}\label{app:non_iid}

We discuss how changes in the noise model might affect our findings. 
First, we argue that if the noise is non-independent, then the relationship between $I[f_N^*]$ and $I[f]$ can easily break down as the noise on distinct bits starts to become more correlated. Second, we argue that independent (but non-identical noise) is less likely to affect our findings.

We first generalize the Bayes-optimal predictor to handle non-iid noise. An arbitrary noise distribution is defined according to a conditional probability $q(z|x):= \Pr(Z=z|X=x)$ of a noisy bitstring $z$ given noiseless input $x$. If $z$ is generated by flipping bits of $x$ independently of the value of $x$, then we may write $q(z|x) := p_E(z-x)$. The generalization of the noise operator $T_\rho$ is then 
\begin{align}
    (T_q f)(x) &:= E_{Z|X} [f(Z)| X=x]
    \\&= \sum_{z} q(z|x) f(z)
    \\&= (p_E * f)(x),
\end{align}
where $(f * g)$ denotes convolution. As in Lemma~\ref{lemma:opt}, one may verify that this generalized noise operator obeys $f_N^*:= \sign(T_q f)$. We may use Fourier theory to show that correlated noise can greatly increase the sensitivity of $T_q f$ over the sensitivity of $f$. Applying the convolution theorem to $T_qf(x) = (p_E * f)(x)$ gives \cite{o2014analysis}:
\begin{equation}
     \widehat{T_q f}(S) = \hat{p}_E(S) \hat{f}(S)
\end{equation}
The influence of $f$ is $\sens[f] = \sum_{S \subseteq {1,\dots,n}} |S| \hat{f}(S)^2$, and so

\begin{equation}
    \sens[T_q f] = \sum_{S \subseteq {1,\dots, n}} |S| \hat{p}_E(S)^2 \hat{f}(S)^2
\end{equation}
A distribution $p_E$ with strong correlations between individual bit flip events corresponds to large values of $\hat{p}_E(S)$ for higher-order terms (larger $|S|$), in which case $\sens[T_q f]$ can become much larger than $\sens[f]$. While this argument is incomplete, the inequality $\sens[T_q f] >\sens[f]$ will often result in an inequality $\sens[\sign(T_q f) ] >\sens[f]$.

On the other hand, noise that is independent but non-identical will tend preserve the relationship between the sensitivity of $f$ and the sensitivity of $f_N^*$ demonstrated in Proposition~\ref{prop:new}. If the noise is uncorrelated, then $p_E$ is separable over bits, and so $\hat{p_E}(S) = \prod_{i \in S} \rho_i$ where $\rho_i$ is the correlation between $z_i$ and $x_i$. So $T_q f$ acts like an exponential filter in Fourier space, i.e. 
\begin{equation}
    \widehat{T_q f}(S) = \prod_{i \in S} \rho_i \hat{f}(S)
\end{equation}
This is qualitatively similar behavior to the iid case where $\widehat{T_\rho}(S) = \rho^{|S|}$ results in $\I[T_\rho f] \leq I[f]$, suggesting that our observations about the relative sensitivity of $f$ versus $f_N^*$ (and specifically Proposition~\ref{prop:new}) likely extend to the case where noise is independent but non-identical across bits.

\section{Experimental methods}\label{app:experiments}

In this appendix, we describe in detail all the experimental methods used to obtain the results displayed in figures \ref{fig:1}, \ref{fig:2}, and \ref{fig:3}.In what follows, we shall refer to the experiments performed to obtain the robustness results of section \ref{sec:is_robust} as {\it experiment 1},  whereas {\it experiment 3} shall refer to all the methods used to show how one can trap transformers with simplicity bias (section \ref{sec:not_robust}).

\subsection{Dataset generation}

We generate synthetic datasets for our experiments. for each dataset, we specify the number of validation samples $n_{\text{val}}$, the number of training samples $n_{\text{{train}}}$, the number of bits in one data point $n_{\text{bit}}$, a bit flip rate $p$, a seed for reproducibility and a boolean function $f$. Given these inputs, we generate the following: 
\begin{enumerate}
    \item {\it Noiseless Data}: uniformly sample $n_{\text{train}} + n_{\text{val}}$ many bitstrings, each of length $n_{\text{bit}} - 1$, compute the final bit for each using $f$. Take the first $n_{\text{train}}$ as the training data and the rest as validation data. 
    \item {\it Noisy Data}: Take the noiseless data above, for each data point, apply a symmetric bit flip rate $p$ on each bit for the previous $(n_{\text{bit}}  -1)$ bits, and keep the last bit unchanged.
\end{enumerate}

\textbf{Experiment 1} To obtain the results in Fig.~\ref{fig:1}, the function $f$ is taken as $\maj(20, 5)$, $\maj(40, 5)$, $\maj(50, 3)$, and $\parity(20, 4)$, respectively. For  $\maj(20, 5)$, $\maj(40, 5)$, and $\maj(50, 3)$, we chose $n_{\text{train}} = 2000$ and $n_{\text{val}} = 10000$, while for the function $\parity(20, 4)$ we worked with $n_{\text{train}} = 5000$ and $n_{\text{val}} = 10000$. 

\textbf{Experiment 2} To obtain the results in Fig.~\ref{fig:2}, we generated 3200 random $k$-juntas (boolean functions depending on only $k$ bits) for $k \in \{5, 6, 7\}$. All experiments used a total of $n=10$ bits, with the subset of $k$ bits chosen randomly. We generated noisy input data using a bitflip rate uniformly sampled from
\begin{equation}
     \{0, 0.05, 0.08, 0.10, 0.13, 0.16, 0.18, 0.2, 0.22, 0.24, 0.26, 0.28, 0.3\},
\end{equation}
At bitflip rates lower than $\approx 0.10$, we found that $f = f_N^*$ with high likelihood for our range of $k$ values. We generated training and validation sets with $n_{\text{train}}=5000$, $n_{\text{val}} = 10000$ datapoints (smaller training datasets frequently resulted in underfitting).

\textbf{Experiment 3} For experiments shown in Fig.~\ref{fig:3}, we generated a ``trap'' function, i.e., a function $f$ for which the corresponding $f^{*}_{N}$ is such that     $\err_f(f_N^*) \approx \err_f(f)$ and $I[f_N^*] \ll I[f]$. To construct this function, we proceeded as follows. First, we restricted our search space to boolean functions $f: \{0, 1\}^{n} \to \{0, 1\}$ satisfying $f(x) = f(\wt(x))$ (weight-based function), since we only have $2^{n + 1}$ such functions compared to the $2^{2^n}$ number of all possible boolean functions over $n$ bits. Let us denote the space of weight-based functions by $\mathcal{W}_{n}$. For each function in $\mathcal{W}_{n}$ (which can be completely characterized by an n-bit string $s$), we choose a bitflip rate $ p \in (0, 1)$. We iterated over the values $n \in \{4, 5, 6, 7, 8\}$ and $p \in \{0.2, 0.22, 0.24, 0.26, 0.28, 0.30\}$. For each combination of $n$ and $p$, we initially computed $\err_f(f_N^*)$, $\err_f(f)$, $I[f]$, and $I[f_N^*]$. Since $\err_f$ represents generalization error over an arbitrarily large dataset, for functions satisfying $\err_f(f_N^*) \approx \err_f(f)$, we sampled finite training and validation datasets to select a dataset such that the performance of a lookup table (which achieves optimal performance on validation data) was close to $\err_f(f)$. This led to a specific dataset on  $n = 8$, $ p =0.2$, where $f(i):= s_i$ with $s = 000110000$. We again use sparse functions depending on a subset $n$ of $n_{\text{bits}} = 14$ that depends only on the embedded string $s$. The individual models (SAN, RNN, and regularized SAN) were trained using $n_{\text{train}} = 10000$ and $n_{\text{val}} = 20000$. The resulting trap function that was used to generate the datasets used in experiment 3 will be denoted by $\mathcal{T}$.

\subsection{Training and Hyperparameters}

Our experiments involve learning discrete functions, which can be  sensitive to parameter initialization and hyperparameter choice. In order to control for the effects of these choices, we chose to perform many trials of each learning experiment (with different initializations) over a range of hyperparameters. In order to compare results across models and learning tasks, we used the following procedure to select hyperparameters for each experiment:
\begin{enumerate}
    \item Select an initial range of hyperparameters 
    \item Perform several hundred trials of the noiseless learning experiment (to learn either $f$ or some randomly sampled $f$)
    \item Prune the hyperparameter set to remove any ranges of hyperparameters for which models consistently failed to learn the target function 
\end{enumerate}
The result was a hyperparameter set for which a large number (at least 70\%) of the corresponding set of models were capable of learning the noiseless function. This implies that failure to learn the function (in the noisy setting) is not a capacity problem, and instead is due to either bad initialization or limitations of the architecture. The exception for this procedure was learning $\parity(20, 4)$ (Fig.~\ref{fig:1}d) which resulted in a bimodal distribution of models: Models were either able to learn parity, or completely failed early in the experiment. This is in line with results reported in Ref.~\cite{bhattamishra_simplicity_2023}, which observed that $\parity(20, 4)$ was near the limit of what is learnable by (small) transformers.

All experiments were performed on a CPU cluster consisting of either Intel Xeon Gold 6230 processors (20 cores, 2.1 GHz) and 768 GB of RAM each, or AMD EPYC 9754 processors (128 cores, 2.25 GHz) and 1.5 TB of RAM. On average, a single hyperparameter tuning experiment took approximately 5 hours across 300 cores in parallel. Total CPU estimates are
\begin{itemize}
    \item $\sim 60,000$ CPU hours for Fig.~\ref{fig:1} ($\sim 100,000$ CPU hours for Experiment 1 in total)
    \item $\sim 15000$ CPU hours for Fig.~\ref{fig:2} ($\sim 30000$ CPU hours for Experiment 2 in total)
    \item $\sim 5000$ CPU hours for Fig.~\ref{fig:3} ($\sim 10000$ CPU hours for Experiment 3 in total).
\end{itemize}

In each experiments, models were trained for up to $1000$ epochs, with an early stopping criterion of $300$ epochs conditioned on no improvement in validation accuracy. All reported metrics correspond to the model performance at the epoch with maximum validation accuracy. We do not tokenize any of these datasets, since both the underlying distribution and noise acting on our synthetic data depend on individual bits.

\subsection{Model architectures}
\textbf{Transformer}: Our experiments employ an encoder-only transformer with a final linear layer to generate outputs. We initialize with the Xavier uniform \cite{glorot2010understanding} (following Ref.~\cite{bhattamishra_simplicity_2023}) and ReLU activation for the encoder's FFN and absolute positional encoding. We train using cross-entropy loss. The initial hyperparameter search space for Experiments 1, 2, and 3 is given in Table~\ref{tab:xformers}, and refers to hyperparameters as introduced in Ref.~\cite{vaswani2017attention}. 

\begin{table}[!htbp]
\centering
\caption{Hyperparameter search space for Transformer model for experiments 1-3. The final hyperparameter sampling space across trials was tuned for each experiment separately (see above).}
\begin{tabular}{ll}
\toprule
\textbf{Hyperparameter} & \textbf{Values} \\
\midrule
Learning Rate ($\texttt{lr}$) &
\begin{tabular}[t]{@{}l@{}}
$[1\times 10^{-5}, 6.31\times 10^{-2}]$
\end{tabular} \\
Depth & 1, 2, 3, 4, 5, 6 \\
Model Dimension ($\texttt{d\_model}$) & 16, 32, 64, 128 \\
Dropout Rate & 0.05, 0.10 \\
Number of Attention Heads & 2, 4, 8 \\
Feedforward Dimension ($\texttt{d\_ffn}$) & 32, 64, 128 \\
\bottomrule
\end{tabular}
\label{tab:xformers}
\end{table}

\FloatBarrier

\textbf{LSTM}: We used LSTMs with Xavier uniform initialization, log-likelihood loss and softmax activation (i.e. cross-entropy loss). The initial hyperparameter search space for LSTMs is shown in Table~\ref{tab:LSTM}.

\begin{table}[ht]
\centering
\caption{Hyperparameter search space for LSTM trials in Experiments 1 and 3.}
\label{tab:lstm_hyperparameters}
\begin{tabular}{ll}
\toprule
\textbf{Hyperparameter} & \textbf{Values} \\
\midrule
Learning Rate ($\texttt{lr}$) &
\begin{tabular}[t]{@{}l@{}}
$[1\times 10^{-4}, 2.78\times 10^{-1}]$
\end{tabular} \\
Embedding Size ($\texttt{emb\_size}$) & 16, 32, 64, 128 \\
Hidden Size ($\texttt{hidden\_size}$) & 16, 32, 64, 128 \\
Dropout Rate & 0.05, 0.10 \\
Depth & 1, 2, 3, 4, 5, 6 \\
\bottomrule
\end{tabular}
\label{tab:LSTM}
\end{table}

Our experiments are initialized with hyperparameters sampled from a distribution such that models achieved $\approx 70\%$ on the corresponding noiseless learning task. This introduces significant variance in model architectures and parameter counts. Tables~\ref{tab:xformer_typ}-\ref{tab:lstm_typ} provide some typical parameter counts (and corresponding hyperparameters) for models initialized in this way in Experiment 1 for the $\parity(20, 4)$ task.
\begin{table}[!ht]
    \centering
    \caption{Parameter count and hyperparameter settings for three random transformers. Median parameter count for transformers was 32514 in the Experiment 1 $\parity(20, 4)$ task.}
    \begin{tabular}{rcccc}
        \hline
        parameters & $\texttt{d\_model}$ & heads & $\texttt{d\_ffn}$ & depth \\
        \hline
        42562 & 64 & 2 & 32 & 2 \\
        11218 & 16 & 4 & 32 & 5 \\
         8738 & 32 & 4 & 64 & 1 \\
        \hline
    \end{tabular}
    \label{tab:xformer_typ}
\end{table}

\begin{table}[!ht]
    \centering
    \caption{Parameter count and hyperparameters for three random LSTMs from the Experiment 1 $\parity(20, 4)$ task. Median parameter count: 46530.}
    \begin{tabular}{rcccc}
        \hline
        parameters & $\texttt{emb\_size}$ & $\texttt{hidden\_size}$ & depth \\
        \hline
        54978 & 64 & 32 & 6 \\
        11938 & 64 & 16 & 4 \\
        29634 & 64 & 32 & 3 \\
        \hline
    \end{tabular}
    \label{tab:lstm_typ}
\end{table}

\FloatBarrier

\subsubsection{Experiment 1}\label{app:exp1}

Experiment 1 involved training transformers and LSTMs to learn $\maj$ and $\parity$ from noisy features. For each of $f \in \{\maj(20, 5), \maj(40, 5), \maj(50, 3), \maj(30, 4), \parity(20, 4)\}$, we construct datasets $(z, f(x))$ with a range of bitflip error probabilities selected from $[0, 0.49)$. For each dataset, we train $300$ randomly initialized LSTMs and Transformers (each training run is one trial) to minimize their corresponding loss function, with early stopping and model selection based on maximum validation accuracy. 

Fig.~\ref{fig:app_maj} shows distributions of validation accuracies for all experiments involving $\maj$ functions. For all majority functions, the transformer validation accuracy concentrates near optimal for each bitflip rate. By contrast, from Fig~\ref{fig:3}g the noiseless performance of transformers on $\maj(30, 4)$ degrades faster than $\maj$ with odd $n$. We also find that the distribution of LSTM validation accuracies for $\maj(30, 4)$ tends to be more bimodal than for $\maj$ problems with odd $n$.

For learning $\parity(20, 4)$, Fig.~\ref{fig:app_sparity} shows a clear bimodal distribution in the final validation accuracy across a range of bitflip rates. Other than the noiseless scenario, LSTMs trained in our hyperparameter set were practically incapable of noise-robust learning.

\begin{figure}[!htbp]
    \centering
    \includegraphics[width=\linewidth]{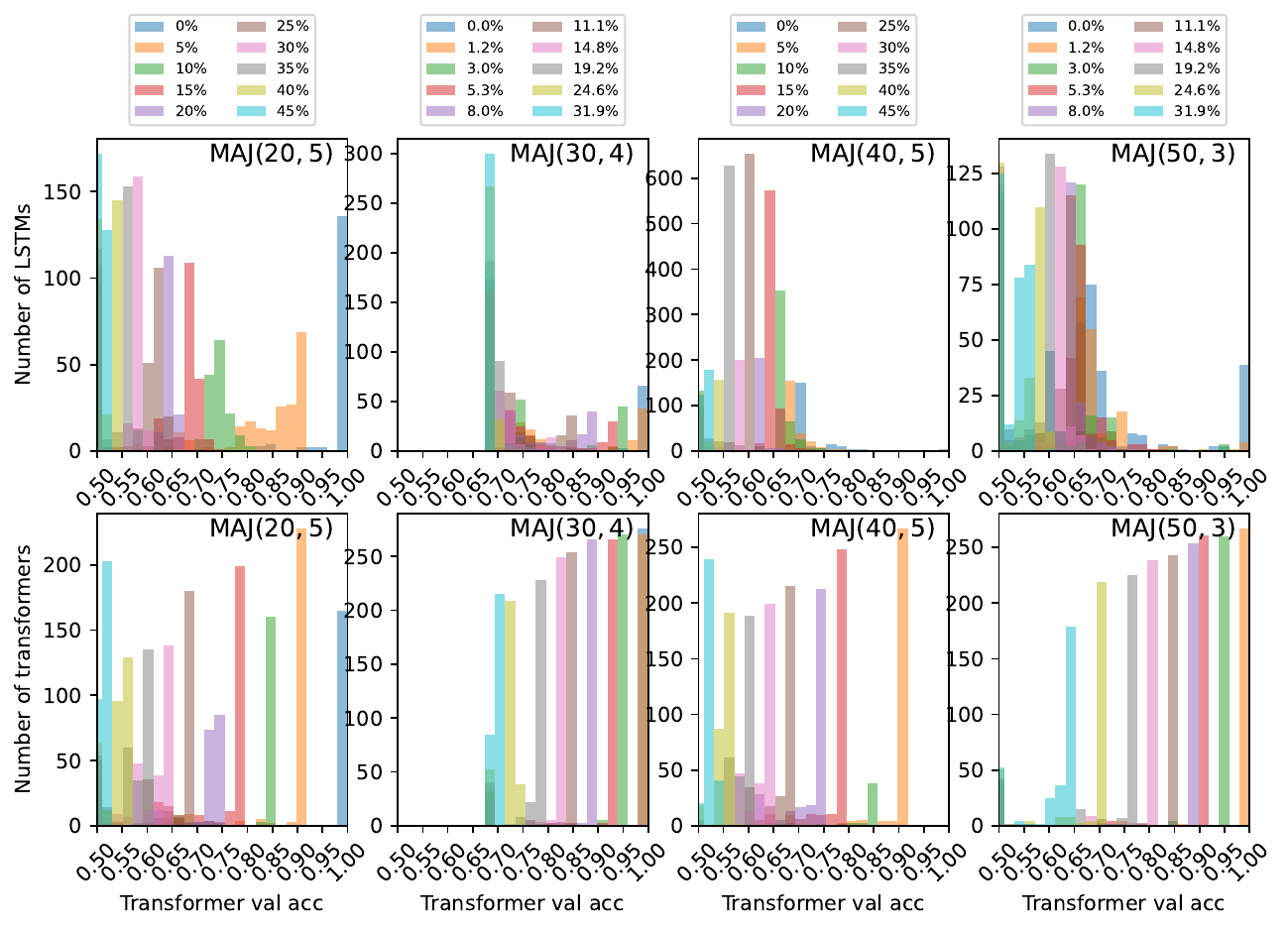}
    \caption{Validation accuracy for LSTMs (top row) and transformers (bottom row) for each sparse $\maj$ noise-robust learning task, colored according to bitflip rate. Each experiment with a particular error rate consists of $300$ independent training runs with random initialization (maximum validation accuracy over each training trial is shown).}
    \label{fig:app_maj}
\end{figure}

\begin{figure}[!htbp]
    \centering
    \includegraphics[width=.5\linewidth]{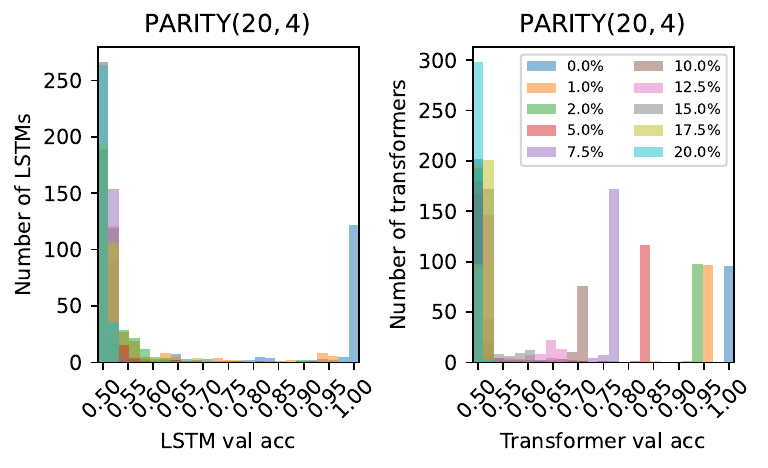}
    \caption{Validation accuracy for LSTMs (left) and transformers (right) for the $\parity(20,4)$ noise-robust learning task (other details same as Fig.~\ref{fig:app_maj}).}
    \label{fig:app_sparity}
\end{figure}

\FloatBarrier


\subsubsection{Experiment 3}\label{app:exp3}

Figure \ref{fig:app_tricker_function_trap} represents the collection of experiments that were used to show that transformers can be trapped due to their simplicity bias (refer to section \ref{sec:not_robust}). Fig.~\ref{fig:app_tricker_function_trap} compares the relationship between sensitivity and accuracy for different architectures for both noisy and noiseless validation datasets generated with the function $\mathcal{T}$. Each plot shows training trajectories of individual models (colored lines) that end at the point where the model achieves maximum accuracy on the respective validation dataset (circular markers). In the presence of noise (Figs.~\ref{fig:app_tricker_function_trap}(a), \ref{fig:app_tricker_function_trap}(c), and \ref{fig:app_tricker_function_trap}(e)), one sees that all the models cluster in a relatively narrow accuracy band around $(0.5, 0.6)$. As pointed out in Section \ref{sec:not_robust}, the overall behavior of the SANs is to learn a function that optimizes the validation accuracy (optimal validation lookup table represented by \purplex). Nonetheless, due to the unavoidable randomness in machine learning models, combined with the transformer simplicity bias, Fig.~\ref{fig:app_tricker_function_trap}(a) shows that some individual models can get significantly closer to the true function (represented by \orangestar). This is certainly not the case with LSTMs (Fig.~\ref{fig:app_majority_trap}), as all the individual models seem to cluster around the optimal training accuracy (lookup table for training set denoted by \cyanx). In Fig.~\ref{fig:app_majority_trap}(c), we see that the presence of simplicity penalty helps to bring the vast majority of the individual models significantly closer to the true function.

On the other hand, when the models are evaluated on the noiseless (original) dataset (Figs.~\ref{fig:app_tricker_function_trap}(b), \ref{fig:app_tricker_function_trap}(d), and \ref{fig:app_tricker_function_trap}(f)),  the architectures SAN and LSTM fail to learn the original function. Nonetheless, SANs perform better than LSTMs and are generally more suitable for this problem. As before, the addition of a simplicity penalty improves the learning traces of the SANs.

The pattern no longer holds for the function $\maj(30, 4)$ (Fig.~\ref{fig:app_majority_trap}). Here, in both the noisy and noiseless validation cases, there seems to be no overall pattern followed by the individual learning traces. The overall behavior of the experiments indicates that LSTMs and SANs approach optimal validation accuracy, with no clear differences in their sensitivity. Adding a simplicity penalty to SANs has minimal effect on their performance. Across experiments, no model comes close to learning the true function.

\begin{figure}[!htbp]
    \centering
    \includegraphics[width=.9\linewidth]{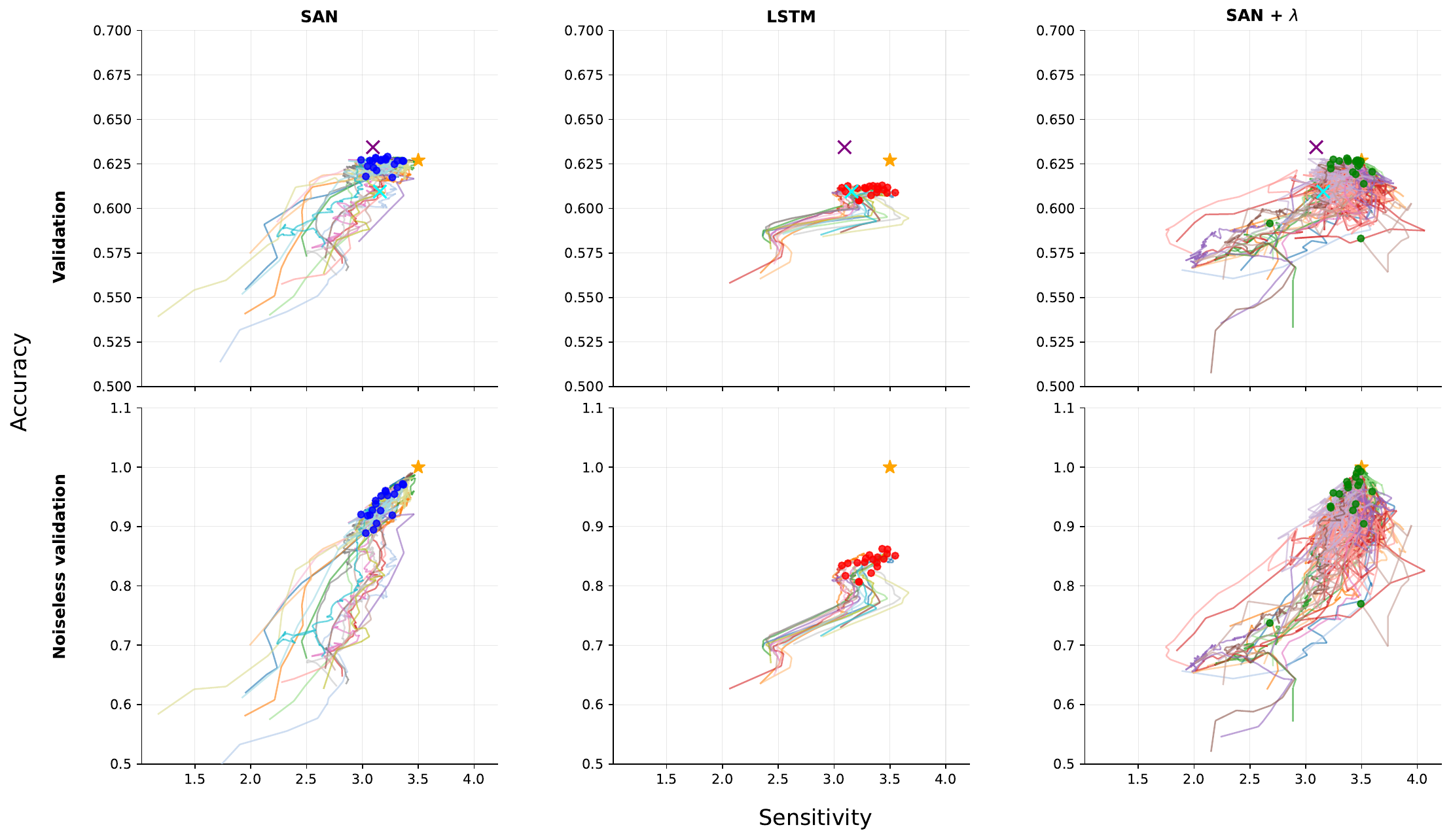}
  \caption{Validation (top row) and noiseless-validation (bottom row) accuracies for each architecture (rows: SAN, RNN, SAN+$\lambda$) trained on the dataset generated by the trap function $\mathcal{T}$ as a function of the sensitivity. Each colored curve is one of 19 independent runs, shown up to the epoch of its peak validation accuracy (blue, red, and green circles). The \purplex \  represents a lookup table for the validation dataset; the \cyanx \ represents a lookup table on the training dataset; the \orangestar  \ represents the true (noiseless) function.}

    \label{fig:app_tricker_function_trap}
\end{figure}

\begin{figure}[!htbp]
    \centering
    \includegraphics[width=.9\linewidth]{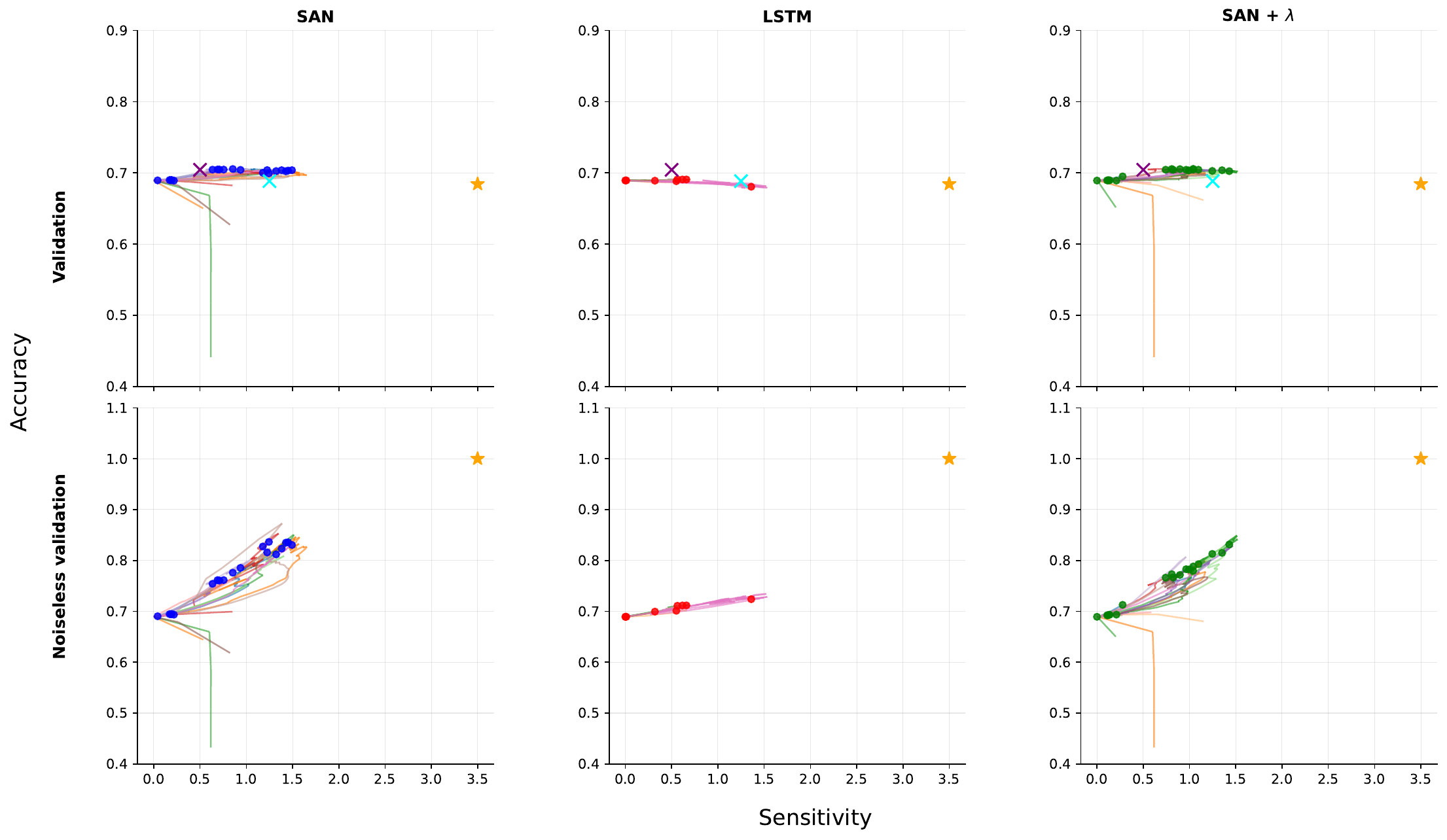}
    \caption{Validation (top row) and noiseless-validation (bottom row) accuracies for each architecture (rows: SAN, RNN, SAN+$\lambda$) trained on datasets generated with $\maj(30, 4)$ as a function of the sensitivity. Each colored curve is one of 19 independent runs, shown up to the epoch of its peak validation accuracy (blue, red, and green circles). The \purplex, \cyanx, \orangestar \  retain their meaning from Fig.~\ref{fig:app_tricker_function_trap}.}
    \label{fig:app_majority_trap}
\end{figure}

\FloatBarrier

\section*{Statement on LLM usage}

Large language models were used to assist in literature review, code development, theory development, and analysis for this project. None of the text or equations in this manuscript were generated by an LLM.